\definecolor{color1}{cmyk}{0,100,0,0}
\definecolor{color2}{cmyk}{0,60,100,0}
\definecolor{color3}{cmyk}{100,0,0,0}
\definecolor{color4}{cmyk}{0,0,0,100}
\pgfplotsset{compat=1.14}
\newtheorem{lemma}{Lemma}
\theoremstyle{definition}
\newtheorem{example}{Example}
\newtheorem{definition}{Definition}[section]
  \title[A Trajectory Calculus for Qualitative Spatial Reasoning Using ASP]
        {A Trajectory Calculus for Qualitative Spatial Reasoning Using Answer Set Programming}
  \author[G. Baryannis et al.]
         {GEORGE BARYANNIS, ILIAS TACHMAZIDIS, SOTIRIS BATSAKIS, GRIGORIS ANTONIOU\\
         University of Huddersfield, UK\\
         \email{\{g.bargiannis, i.tachmazidis, s.batsakis, g.antoniou\}@hud.ac.uk}
         \and MARIO ALVIANO\\
            University of Calabria, Italy\\
            \email{alviano@mat.unical.it}
         \and TIMOS SELLIS, PEI-WEI TSAI\\
            Swinburne University of Technology, Australia\\
            \email{\{tsellis, ptsai\}@swin.edu.au}}
\begin{document}

\label{firstpage}

\maketitle

  \begin{abstract}
    Spatial information is often expressed using qualitative terms such as natural language expressions instead of coordinates; reasoning over such terms has several practical applications, such as bus routes planning. Representing and reasoning on trajectories is a specific case of qualitative spatial reasoning that focuses on moving objects and their paths. In this work, we propose two versions of a trajectory calculus based on the allowed properties over trajectories, where trajectories are defined as a sequence of non-overlapping regions of a partitioned map. More specifically, if a given trajectory is allowed to start and finish at the same region, 6 base relations are defined (TC-6). If a given trajectory should have different start and finish regions but cycles are allowed within, 10 base relations are defined (TC-10). Both versions of the calculus are implemented as ASP programs; we propose several different encodings, including a generalised program capable of encoding any qualitative calculus in ASP. All proposed encodings are experimentally evaluated using a real-world dataset. Experiment results show that the best performing implementation can scale up to an input of 250 trajectories for TC-6 and 150 trajectories for TC-10 for the problem of discovering a consistent configuration, a significant improvement compared to previous ASP implementations for similar qualitative spatial and temporal calculi. \textbf{This manuscript is under consideration for acceptance in TPLP.}
  \end{abstract}

  \begin{keywords}
    Answer Set Programming, Spatial Reasoning, Qualitative Reasoning, Trajectory
  \end{keywords}


\section{Introduction}

In recent years, there has been a proliferation of widely available location-aware devices, leading to an abundance of location-based information. Such information has a spatio-temporal aspect that can be attached both directly and indirectly: in the former case it is provided by embedded systems like GPS, RFID or GSM, while in the latter it is provided by the users themselves when they, for instance, tag photos with geolocation information.

When this spatio-temporal information refers to the same object (at different, possibly sequential points in time), it essentially represents the \textit{trajectory} of a moving object. The ampleness and direct availability of such trajectory data is evident if we consider the billions of RFID tags generated daily or the millions of sensors collecting information within disparate devices and the fact that these figures are expected to double every 2 years over the next decade~\cite{Patroumpas2015}.

Trajectory data can prove useful in delivering a wide array of location-based services. The most common category of applications is related to traffic surveillance and control, and fleet management, with trajectories being used to detect flocks and convoys or to identify frequently followed routes. More personalised trajectory-based services can include carpooling via similarity identification in vehicle traces or tracing services for objects or people. Other possible applications include wildlife protection by monitoring the clustering of animals.

A common characteristic of the aforementioned trajectory-based services is that they rely on a means of comparing between different trajectories. Traditionally, this can be achieved in one of two ways: quantitatively or qualitatively. The former approach involves directly comparing points within trajectories in terms of their spatial or temporal aspects while the latter depends on expressing relations between moving objects and their trajectories using natural language expressions. Qualitative reasoning has been argued to be closer to the way the human mind reasons and makes decisions, in both spatial and temporal settings~\cite{Renz2000,Escrig2002}. Since the majority of the trajectory-based services outlined above are directly aimed at information systems interacting with users, qualitative reasoning can be deemed more suitable.

In this work, we introduce two qualitative calculi, named TC-6 and TC-10, for trajectories that are defined based on arbitrarily partitioned maps. Both calculi are designed with applicability in mind, facilitating straightforward implementations that can scale to large numbers of trajectories. To that end, we rely on three important simplifications:
\begin{itemize}
\item Each trajectory is modelled as a sequence of regions on the map, instead of a sequence of latitude/longitude pairs.
\item Trajectories are compared as a whole and not on the basis of individual positions of moving objects.
\item Characteristics of moving objects such as their velocity and acceleration are not taken into account.
\end{itemize}

TC-6 and TC-10 are not focused on real-time and predictive applications that require reasoning on specific characteristics of moving objects at any given point in time (e.g velocity and acceleration), such as collision monitoring and prevention. The proposed calculi are especially suitable for any application that relies on trajectory databases to reason about the relations among large numbers of trajectories such as Origin/Destination (O/D) analysis~\cite{Hu2016,Andrienko2017}, transportation demand discovery~\cite{Wang2017,Moreira2013} and stream of population monitoring~\cite{Ma2017}.

TC-6 represents the simplest case where no restrictions are imposed on trajectories apart from the fact that they need to represent moving objects, i.e., ones that do not stay on the same region in consecutive time points. TC-10, on the other hand, requires that moving objects do not finish at the same region as the one they started at. For both calculi, we capture all possible pairwise disjoint relations between two trajectories and define all possible results of composing these base relations, in the form of composition tables. Various alternative Answer Set Programming (ASP) implementations are proposed for each calculus and a comparative evaluation is presented, focusing on performance in terms of execution time and memory consumption for consistency problems based on a real dataset. The results indicate significant improvements in terms of execution time and memory consumption, compared to standard ASP implementations of qualitative calculi in literature.
As an additional result, we propose a generalised ASP encoding for qualitative calculi using extensional predicates to represent relations and composition tables.
The meta-encoding can enforce the composition table by means of a single integrity constraint, while all other encodings use at least one rule for each entry in the table.
On the other hand, such a generalisation comes with some overhead, which is also measured in our experiments.

The rest of this paper is structured as follows. Section~\ref{sec:relwork} offers a concise analysis of related efforts on qualitative reasoning about moving objects, while Section~\ref{sec:qc} defines the computational problem analysed in this work. Sections~\ref{sec:tc6} and~\ref{sec:tc10} formalise TC-6 and TC-10, respectively, in terms of their base relations and the associated composition table. Section~\ref{sec:impl} proposes a series of ASP encodings for the two calculi, as well as a generalised ASP encoding for qualitative calculi, while Section~\ref{sec:eval} discusses the experimental evaluation of the proposed encodings. Finally, Section~\ref{sec:concl} concludes and points out future research directions.

\section{Related Work}
%
%
\label{sec:relwork}

A concise summary of research on qualitative spatio-temporal relations for moving objects is provided by Van de Weghe~\citeyear{Weghe2017}. The author makes a distinction between the so-called ``high-level'' and ``low-level'' approaches: the former focus only on modelling spatio-temporal continuity, while the latter concern themselves with a detailed representation of moving objects that takes into account properties such as objects' orientation and velocity.

Earliest works are representative of the ``high-level'' category, such as the work of Muller~\citeyear{Muller1998}, where six different classes of motion are defined between two objects. Then, considering that the two objects are related using one of the spatial and temporal relations in RCC-8~\cite{Cohn1997} and Allen's interval algebra~\cite{Allen1981}, respectively, the author determines the new spatial and temporal relations that would result if one of the six types of motion takes place. This approach is further refined by Hazarika and Cohn~\citeyear{Hazarika2001} to disallow abnormal transitions, e.g., objects that disappear and reappear instantaneously at the same spatial location.

Later approaches to qualitative spatio-temporal reasoning focus on integrating more characteristics of moving objects. The Qualitative Trajectory Calculus (QTC) is a prominent example and has yielded several variants in literature so far. QTC\textsubscript{B}~\cite{Weghe2006} is the basic type and relies on the Euclidean distance between moving objects, as well as their speed. While QTC\textsubscript{B} assumes objects moving freely in a n-dimensional space, QTC\textsubscript{N}~\cite{Delafontaine2011} focuses on objects whose trajectories are constrained by a network (e.g., a road network). Another variant, QTC\textsubscript{C}~\cite{Weghe2005}, is a more complex version of the basic type that also takes into account the motion azimuth of the objects and relies on the double cross notation introduced by Zimmermann and Freksa~\citeyear{Zimmermann1996}.

The detailed representation of moving objects inherent in all variants of QTC leads to significant numbers of relations: QTC\textsubscript{B} and QTC\textsubscript{N} define 27 relations, while QTC\textsubscript{C} includes 81 relations in total. This, in turn, leads to limitations with regard to automated reasoning, since at its most complex form, QTC reasoning relies on a 81x81 composition table. This is evident in the only available (to the best of our knowledge) implementation of QTC~\cite{Delafontaine2011B}, where reasoning capabilities amount to calculating the relation between given trajectory pairs at any given time, while composition of relations is not supported.


Following a similar approach, Noyon et al.~\citeyear{Noyon2007} propose a trajectory data model that relies on the relative velocity and relative position between two moving objects. However, moving objects are not restricted to points, but can also be lines or polygons. The resulting relations depend on whether the objects are moving closer or farther, whether they are accelerating or decelerating and whether they are points, lines or polygons. This simplified model results in significantly less relations (compared to all the aforementioned QTC variants), ranging from 6 relations for points to 9 relations for polygons. However, no implementation has been proposed so far to validate the model's applicability.

It should be evident that literature on qualitative spatio-temporal relations for moving objects has mostly focused on creating models that allow for a detailed representation of the objects in motion, rather than provide support for efficient reasoning mechanisms. In order to achieve the latter, models need to be less elaborate either through simplification or by focusing on one feature at a time. In their attempts to provide a general framework for qualitative spatio-temporal representation and reasoning, Mart\'{i}nez-Mart\'{i}n et al.~\citeyear{Martinez2012} follow the second path, providing CSP formalisations~\cite{Freuder2006} that only focus on one feature of moving objects at a time. On the contrary, our work follows the first path: by applying the simplifications outlined in the introductory section, a high-level model is proposed which does not represent the individual features of objects in motion but, instead, is able to efficiently reason about the relations among trajectories when viewed as complete paths, scaling up to hundreds of trajectories. In that sense our work is not directly comparable to~\cite{Martinez2012}, since their reasoning process focuses exclusively on one of distance, velocity or acceleration features of a moving object, while ours proposes a trajectory model that abstracts away these features.

To the best of our knowledge the only studies in literature that employ ASP for qualitative spatio-temporal reasoning are those of Li~\citeyear{Li2012}, Brenton et al.~\citeyear{Brenton2016} and Walega et al.~\citeyear{Walega2017}; out of these, only the first two are directly related to our work, since Walega et al.~\citeyear{Walega2017} focuses on reasoning about action and change in relation to space. We use the best-performing encoding in Brenton et al.~\citeyear{Brenton2016}, which represents an improvement over Li~\citeyear{Li2012}, as a starting point for our implementation and create several new encodings that outperform it. Additionally, we propose a generalised and compact meta-encoding for spatio-temporal calculi that takes advantage of conditional literals~\cite{Gebser2015} while also being able to enforce a composition table relying only on a single integrity constraint and a set of facts representing table entries.

\section{Qualitative Calculi and Model Existence}
\label{sec:qc}

This work focuses on qualitative calculi for reasoning on relations among elements of a domain of interest.
A calculus is specified via a \emph{composition table}, that is, a function mapping every pair of relations with the relations that can be associated with their composition.
The computational problem analysed in this paper is \emph{model existence}, defined next.

\begin{definition}
\label{def:problem}
({\bf QC Model existence})
Given a set $\mathcal{E}$ of elements, a set $\mathcal{R}$ of relations including $\mathit{eq}$, a composition table $c : \mathcal{R} \times \mathcal{R} \to 2^\mathcal{R} \setminus \{\emptyset\}$, and a set $\mathcal{C}$ of constraints of the form
$(x,y) \in R$,
where $x,y \in \mathcal{E}$, and $R \subseteq \mathcal{R}$,
\emph{QC model existence} amounts to checking the existence of a model of $(\mathcal{E},\mathcal{R},c,\mathcal{C})$, that is, an assignment
$\nu : \mathcal{E} \times \mathcal{E} \to \mathcal{R}$
satisfying the following properties:
(i) for each $x \in \mathcal{E}$, $\nu(x,x) = \mathit{eq}$;
(ii) for each $x,y,z \in \mathcal{E}$, $\nu(x,z) \in c(\nu(x,y), \nu(y,z))$ holds;
(iii) for each constraint $(x,y) \in R$ in $\mathcal{C}$, $\nu(x,y) \in R$ holds.
\end{definition}

\begin{restatable}{theorem}{ThmNP}
QC model existence belongs to the complexity class NP.
\end{restatable}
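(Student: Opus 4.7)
The plan is the standard guess-and-check recipe for NP membership. First I would identify the witness: the assignment $\nu : \mathcal{E} \times \mathcal{E} \to \mathcal{R}$ itself. Its size is at most $|\mathcal{E}|^2 \cdot \lceil \log_2 |\mathcal{R}| \rceil$ bits, which is polynomial in the size of the input instance $(\mathcal{E},\mathcal{R},c,\mathcal{C})$ (note that the composition table $c$ already contributes at least $|\mathcal{R}|^2$ to the input, so $|\mathcal{R}|$ is polynomially bounded by the input length as well).

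Next I would describe the verifier. Given a candidate $\nu$, it checks the three conditions of Definition~\ref{def:problem} directly: (i)~iterate over $x \in \mathcal{E}$ and test $\nu(x,x) = \mathit{eq}$, in time $O(|\mathcal{E}|)$; (ii)~iterate over all triples $(x,y,z) \in \mathcal{E}^3$ and, using a table lookup in $c$, test whether $\nu(x,z) \in c(\nu(x,y),\nu(y,z))$, in time $O(|\mathcal{E}|^3 \cdot |\mathcal{R}|)$ when each set $c(r,r')$ is represented explicitly as a subset of $\mathcal{R}$; (iii)~iterate over the constraints in $\mathcal{C}$ and test $\nu(x,y) \in R$ for each, in time $O(|\mathcal{C}| \cdot |\mathcal{R}|)$. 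The total running time is polynomial in the input size, so the verifier runs in deterministic polynomial time.

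Putting these two ingredients together, a nondeterministic Turing machine can guess $\nu$ in polynomial time and then invoke the verifier; acceptance corresponds exactly to the existence of a valid model, which places QC model existence in NP.

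I do not expect any real obstacle here; the only point that deserves a sentence of care is the implicit assumption about the representation of $c$ and of the constraint sets $R$, because these affect the precise polynomial bound on the verifier. Provided both are given extensionally (or at least with polynomial-time membership tests), the argument above goes through without modification.
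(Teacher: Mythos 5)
Your proof is correct, but it takes a genuinely different route from the paper. You give the classical guess-and-check argument: the assignment $\nu$ is a polynomial-size witness, and the three conditions of Definition~\ref{def:problem} can each be verified in deterministic polynomial time (your caveat about the extensional representation of $c$ and of the constraint sets $R$ is the right one to flag, and it is satisfied by the problem as stated, since $c$ is given as a table). The paper instead derives NP membership as a corollary of Lemma~\ref{lem:qc}: it first shows that the GEN encoding of Section~\ref{sec:gen} is a faithful translation of QC model existence into an ASP program built only from choice rules and integrity constraints, and then invokes the known result that answer set existence for that fragment is in NP. The paper's route buys a correctness guarantee for the implementation it actually runs --- the complexity bound and the soundness of the encoding come from the same lemma --- whereas your route is more elementary and self-contained, requiring neither the GEN encoding nor any background on ASP complexity. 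Both arguments are valid proofs of the theorem.
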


The problem can be solved by an ASP encoding comprising choice rules and integrity constraints, a fragment of ASP for which answer set existence belongs to the complexity class NP  \cite{DBLP:conf/slp/MarekT89,DBLP:journals/jlp/CadoliS93}.
Intuitively, such an encoding guesses a unique relation for each pair of elements by means of a choice rule, and enforces the satisfaction of the composition table and of the constraints in input by means of integrity constraints.


\section{Trajectory Calculus with 6 Base Relations (TC-6)}
\label{sec:tc6}

Both calculi defined in this work assume that trajectories are expressed based on a partitioning of a given map according to the following definition:

\begin{definition}
\label{def:partitioning}
({\bf Partitioning}) Given a map $M$, a partitioning $R$ of the map $M$ is defined as a set
of non-overlapping regions $r_{i}$, as defined in RCC8~\cite{Renz2002}, such that $M = \bigcup\limits_{r_{i} \in R}^{} r_{i}$. Following RCC8 notation, each region is related through EQ (equal) only to itself, EC (externally connected) to its neighbouring regions, and DC (disconnected) with all other regions.
\end{definition}

For the first calculus, namely TC-6, trajectories are arbitrary sequences of regions as expressed in Definition~\ref{def:partitioning}, with the sole constraint that consecutive regions within them must be different and externally connected:

\begin{definition}
\label{def:tc6traj}
({\bf TC-6 Trajectory}) Given a partitioning $R$, a trajectory $T$ in TC-6 is defined as a sequence of
externally connected regions ($t_{1}, t_{2}, ..., t_{n}$), $n \geq 2$ where $t_{i} \neq t_{i+1}$ and
EC($t_{i},t_{i+1}$), $1 \leq i < n$.
\end{definition}

Based on this definition, TC-6 admits trajectories that, for instance, start and finish at the same region or revisit a region after moving to a different one.

\begin{definition}{}
({\bf TC-6 Base relations})
The 6 base relations of Table~\ref{tab:trajectory_relations_6}, capture the possible
relations between two trajectories $T_{1}$ and $T_{2}$, where $|T_{i}|$ denotes the length
of trajectory $T_{i}$, while $T_{i}^{j}$ denotes the region $t_{j}$ of trajectory $T_{i}$.
All 6 base relations are pairwise disjoint and symmetric. Relation Equal (Eq) is also
transitive.
\end{definition}

\begin{definition}{}
\label{def:tc6comp}
({\bf TC-6 Composition table})
Table~\ref{tab:composition_table_6} represents the result of the composition of each pair of
trajectory relations of Table~\ref{tab:trajectory_relations_6}. The composition table is
interpreted as follows: if relation i holds between $T_{1}$ and $T_{2}$ while relation j
holds between $T_{2}$ and $T_{3}$, then the entry (i,j) of Table~\ref{tab:composition_table_6}
denotes the possible relation(s) holding between $T_{1}$ and $T_{3}$. Note that relation All represents all 6 base relations.
\end{definition}

\begin{table}[ht]
\caption{TC-6 Base relations}
\label{tab:trajectory_relations_6}
\centering
\begin{tabular}{lll}
	\hline\hline
	Relation & Definition & Interpretation \\ [0.5ex]
	\hline\hline
	Equal (Eq) & $|T_{1}| = |T_{2}| = n,$ & $T_{1}$ and $T_{2}$ are equal \\
	 & $EQ(T_{1}^{i}, T_{2}^{i}), 1 \leq i \leq n$ & (identical trajectories) \\ \hline
	
	Alternative (Alt) & $|T_{1}| = n, |T_{2}| = m,$ & $T_{1}$ and $T_{2}$ are alternative\\
	 & $EQ(T_{1}^{1},T_{2}^{1}), EQ(T_{1}^{n},T_{2}^{m}),$ &  (different trajectories for \\
	 & EITHER $n \neq m$ & the same start and finish\\
	 & OR $\exists i~NOT~EQ(T_{1}^{i},T_{2}^{i}), 1 < i < n$ &  regions)\\ \hline
	
	Start (S) & $|T_{1}| = n, |T_{2}| = m,$ & $T_{1}$ and $T_{2}$ start at the same region\\
	 & $EQ(T_{1}^{1},T_{2}^{1}), NOT~EQ(T_{1}^{n},T_{2}^{m})$ &   (but finish at different regions)\\ \hline
	
	Finish (F) & $|T_{1}| = n, |T_{2}| = m,$ & $T_{1}$ and $T_{2}$ finish at the same region \\
	 & $NOT~EQ(T_{1}^{1},T_{2}^{1}), EQ(T_{1}^{n},T_{2}^{m})$ & (but start at different regions)\\ \hline

	Intersect (I) & $|T_{1}| = n, |T_{2}| = m,$ & $T_{1}$ and $T_{2}$ intersect\\
	 & $NOT~EQ(T_{1}^{1},T_{2}^{1}), NOT~EQ(T_{1}^{n},T_{2}^{m}),$ & (at least one common\\
	 & $\exists i,j~EQ(T_{1}^{i},T_{2}^{j}),$ & region, but different\\
	 & EITHER $1 {\leq} i {<} n,1 {<} j {\leq} m$ & start and finish regions)\\
	 & OR $1 {<} i {\leq} n,1 {\leq} j {<} m$ & \\	\hline

	Disjoint (Dis)& $|T_{1}| = n, |T_{2}| = m,$ & $T_{1}$ and $T_{2}$ are disjoint \\
	 & $\forall i,j~NOT~EQ(T_{1}^{i},T_{2}^{j}), 1 {\leq} i {\leq} n,1 {\leq} j {\leq} m$ & (no common regions) \\\hline
	 \hline	
\end{tabular}
\end{table}

\begin{table}[ht]
\caption{TC-6 Composition Table}
\centering
\begin{tabular}{  l  l  l  l  l  l  l }
	\hline\hline
	Relations & Eq & Alt & S & F & I & Dis \\  [0.5ex]
	\hline
	Eq & Eq & Alt & S & F & I & Dis \\ 
	Alt & Alt & Eq, Alt & S & F & I, Dis & I, Dis \\ 
	S & S & S & Eq, Alt, S   & I,Dis & F, I, Dis & F, I, Dis\\ 
	F & F & F & I, Dis & Eq, Alt, F   & S, I, Dis & S, I, Dis \\ 
	I & I & I, Dis & F, I, Dis & S, I, Dis & All & Alt, S, F, I, Dis  \\
	Dis & Dis & I, Dis & F, I, Dis & S, I, Dis & Alt, S, F, I, Dis & All \\ 
    \hline\hline
\end{tabular}
\label{tab:composition_table_6}
\end{table}

\begin{example}
Let $\mathcal{T}$ be $\{T_1,T_2,T_3\}$, and let $\mathcal{C}$ be $\{(T_1,T_2) \in \{Dis\}, (T_2,T_3) \in \{Eq, Alt\}\}$.
Every model of $(\mathcal{T},\mathcal{C})$ maps $(T_1,T_2)$ to $Dis$.
Moreover, the model mapping $(T_2,T_3)$ to $Eq$ also maps $(T_1,T_3)$ to $Dis$.
Finally, there are two models mapping $(T_2,T_3)$ to $Alt$, one mapping $(T_1,T_3)$ to $I$, and one mapping $(T_1,T_3)$ to $Dis$.
\hfill $\blacksquare$
\end{example}

\section{Trajectory Calculus with 10 Base Relations (TC-10)}
\label{sec:tc10}

Retaining the same way of map partitioning as in Definition~\ref{def:partitioning}, we impose a further restriction on trajectories, requiring them to have different start and finish regions. This leads us to a new calculus, namely TC-10, formalised by the following definitions.


\begin{definition}{}
({\bf TC-10 Trajectory}) Given a partitioning $R$, a trajectory $T$ in TC-10 is defined as a sequence of externally connected
regions ($t_{1}, t_{2}, ..., t_{n}$), $n \geq 2$ where $t_{1} \neq t_{n}$, $t_{i} \neq t_{i+1}$
and EC($t_{i},t_{i+1}$), $1 \leq i < n$.
\end{definition}

Based on this definition, TC-10 admits trajectories that have any arbitrary cycle within them, but never finish at the same region they started.

\begin{definition}{}
({\bf TC-10 Base relations})
The 10 base relations of Table~\ref{tab:trajectory_relations_10} capture the possible
relations between two trajectories $T_{1}$ and $T_{2}$, where $|T_{i}|$ denotes the length
of trajectory $T_{i}$, while $T_{i}^{j}$ denotes the region $t_{j}$ of trajectory $T_{i}$.
All 10 base relations are pairwise disjoint. Relations Equal (Eq), Reverse (Rev), Alternative
(Alt), Return (Ret), Start (S), Finish (F), Intersect (I) and Disjoint (Dis) are symmetric.
Relation Equal (Eq) is also transitive. Relations Extends (Ex) and Extended By (Exi) are inverses of each other.
\end{definition}

\begin{definition}{}
\label{def:tc10comp}
({\bf TC-10 Composition table})
Table~\ref{tab:composition_table_10} represents the result of composing each distinct pair of
trajectory relations of Table~\ref{tab:trajectory_relations_10}. The composition table is
interpreted as follows: if relation i holds between $T_{1}$ and $T_{2}$ while relation j
holds between $T_{2}$ and $T_{3}$, then the entry (i,j) of Table~\ref{tab:composition_table_10}
denotes the possible relation(s) holding between $T_{1}$ and $T_{3}$. Note that relation All represents all 10 base relations.
\end{definition}

\begin{table}[ht]
\caption{TC-10 Base relations}
\centering
\begin{tabular}{ l l l } 
	\hline\hline
	Relation & Definition & Interpretation \\ [0.5ex]
	\hline\hline


	Equal (Eq) & As in Table~\ref{tab:trajectory_relations_6} & As in Table~\ref{tab:trajectory_relations_6} \\ \hline
	
    Reverse (Rev) & $|T_{1}| = |T_{2}| = n,$ & $T_{1}$ and $T_{2}$ are reverse \\
	 & $EQ(T_{1}^{i},T_{2}^{n+1-i}), 1 \leq i \leq n$ & (reversed trajectories) \\ \hline
	

 	Alternative (Alt) & As in Table~\ref{tab:trajectory_relations_6} & As in Table~\ref{tab:trajectory_relations_6} \\ \hline

    Return (Ret) & $|T_{1}| = n, |T_{2}| = m,$ & $T_{1}$ and $T_{2}$ are return \\
	 & $EQ(T_{1}^{1},T_{2}^{m}), EQ(T_{1}^{n},T_{2}^{1}), $ & trajectories (can be  \\
	 & EITHER $n \neq m$ & used in order to return \\
	 & OR $\exists i~NOT~EQ(T_{1}^{i},T_{2}^{n+1-i}), 1 < i < n$ &  to the same region) \\ \hline
	
	
	Start (S) & As in Table~\ref{tab:trajectory_relations_6} & As in Table~\ref{tab:trajectory_relations_6} \\ \hline
	

    Finish (F) & As in Table~\ref{tab:trajectory_relations_6} & As in Table~\ref{tab:trajectory_relations_6} \\ \hline

    Extends (Ex) & $|T_{1}| = n, |T_{2}| = m,$ & $T_{1}$ extends $T_{2}$ (the finish region  \\
	 & $NOT~EQ(T_{1}^{n},T_{2}^{1}), EQ(T_{1}^{1},T_{2}^{m})$ & of $T_{2}$ is the start region of $T_{1}$)  \\ \hline

	Extended By (Exi) & $|T_{1}| = n, |T_{2}| = m,$ & $T_{1}$ is extended by $T_{2}$ (the finish \\
	 & $NOT~EQ(T_{1}^{1},T_{2}^{m}), EQ(T_{1}^{n},T_{2}^{1}) $ &  region of $T_{1}$ is the start region of $T_{2}$)\\ \hline

	Intersect & $|T_{1}| = n, |T_{2}| = m,$ & $T_{1}$ and $T_{2}$ intersect\\
	(I) & $NOT~EQ(T_{1}^{1},T_{2}^{1}), NOT~EQ(T_{1}^{n},T_{2}^{m}),$ & (at least one common\\
 	 & $NOT~EQ(T_{1}^{1},T_{2}^{m}), NOT~EQ(T_{1}^{n},T_{2}^{1}),$ & region, but different\\
     & $\exists i,j~EQ(T_{1}^{i},T_{2}^{j}),$ & start and finish regions) \\
     & EITHER $1 {<} i {<} n,1 {\leq} j {\leq} m$ & \\
	 & OR $1 {\leq} i {\leq} n,1 {<} j {<} m$ & \\ \hline

	
	Disjoint (Dis) & As in Table~\ref{tab:trajectory_relations_6} & As in Table~\ref{tab:trajectory_relations_6} \\ \hline\hline
\end{tabular}
\label{tab:trajectory_relations_10}
\end{table}

\begin{table}[ht]
\caption{Composition Table for 10 Trajectory Relations}
\centering
\small
\renewcommand{\arraystretch}{0.85}
\begin{tabular}{  l  l  l  l  l  l  l  l  l  l  l }
	\hline\hline
	Rels & Eq & Rev & Alt & Ret & S & F & Ex & Exi & I & Dis \vspace{-0.6em}\\  [0.5ex]
	\hline\hline
	Eq & Eq & Rev & Alt & Ret & S & F & Ex & Exi & I & Dis \vspace{-0.6em}\\\hline 
	Rev & Rev & Eq & Ret & Alt & Exi & Ex & F & S & I & Dis \vspace{-0.6em}\\\hline 
	Alt & Alt & Ret & Eq,  & Rev, & S & F & Ex & Exi & I,Dis & I,Dis \\
	    &     &     & Alt  & Ret &   &   &    &     &       &       \vspace{-0.6em}\\\hline 
	Ret & Ret & Alt & Rev, & Eq,  & Exi & Ex & F & S & I,Dis & I,Dis \\
        &     &     & Ret  & Alt &     &    &   &   &       &   \vspace{-0.6em}\\\hline 
	S & S & Ex & S & Ex & Eq,Alt, & Exi,I, & Rev,Ret, & F,I, & F,Exi, & F,Exi, \\	
	  &   &    &   &    & S       & Dis    & Ex       & Dis  & I,Dis  & I,Dis \vspace{-0.6em}\\\hline 
	F & F & Exi & F & Exi & Ex,I, & Eq,Alt, & S,I, & Rev,Ret, & S,Ex, & S,Ex, \\ 	
	  &     &   &   &     & Dis   & F       & Dis  & Exi      & I,Dis & I,Dis \vspace{-0.6em}\\\hline 
	Ex & Ex & S & Ex & S & F,I, & Rev,Ret, & Exi,I, & Eq,Alt, & F,Exi, & F,Exi,  \\
	   &    &   &    &   & Dis  & Ex       & Dis    & S       & I,Dis  & I,Dis  \vspace{-0.6em}\\\hline 
	Exi & Exi & F & Exi & F & Rev,Ret, & S,I, & Eq,Alt, & Ex,I, & S,Ex, & S,Ex, \\
	    &     &   &     &   & Exi      & Dis  & F       & Dis   & I,Dis & I,Dis \vspace{-0.6em}\\\hline 
	I & I & I & I,   & I,   & F,Ex, & S,Exi, & S,Exi, & F,Ex, & All & Alt,Ret, \\
	  &   &   & Dis  & Dis & I,Dis & I,Dis  & I,Dis  & I,Dis &     & S,F,Ex, \\ 	
	  &   &   &     &     &       &        &        &       &     & Exi,I,Dis \vspace{-0.5em}\\\hline 
	Dis & Dis & Dis & I,   & I,   & F,Ex, & S,Exi, & S,Exi, & F,Ex, & Alt,Ret,  & All \\
	    &     &     & Dis  & Dis & I,Dis & I,Dis  & I,Dis  & I,Dis & S,F,Ex,   & \\
	    &     &     &      &     &       &        &        &       & Exi,I,Dis & \vspace{-0.6em}\\ \hline\hline

\end{tabular}

\label{tab:composition_table_10}
\end{table}

\section{Implementation}
\label{sec:impl}

The nature of qualitative calculi such as TC-6 and TC-10 and the associated composition tables allows for a straightforward transformation to any formalism that can capture boolean satisfiability. Following the work of Brenton et al.~\citeyear{Brenton2016}, we propose the implementation of TC-6 and TC-10 as ASP programs, such that each answer set corresponds to a valid configuration, i.e., an assignment of relations to a set of trajectories that conforms to the corresponding composition table. Our starting point is the best-performing ASP encoding in Brenton et al.~\citeyear{Brenton2016}, \textit{COI7}, which stands for Choice rule with One predicate per pair and Integrity constraints for composition table entries with more than 7 relations. This encoding has the following characteristics: (1) search space (the possible relations $r_1,...,r_n$ in the calculus) is represented as a choice rule of the form $\{ r_1(X,Z) ; ... ; r_n(X,Z) \} = 1 \leftarrow traj(X), traj(Z), X!=Z.$, with (2) inverse relations are modelled using a single predicate ($ex(x,y)$ and $ex(y,x)$ instead of $exi(x,y)$); (3) the composition table entries are encoded either as disjunctive rules of the form $r_1(X,Z) | ... | r_n(X,Z) \leftarrow r_a(X,Y), r_b(Y,Z).$, when the entry contains up to 7 relations, or as integrity constraints of the form $\leftarrow r_i(X,Z), r_a(X,Y), r_b(Y,Z)$, for all $r_i$ that are \textit{not} part of the composition table entry; (4) there is an additional integrity constraint for each pair of relations, to model the fact that they are pairwise disjoint.

To improve on COI7, we rely on the specific characteristics of TC-6 and TC-10. First, there are no inverse relations in TC-6 and only one pair in TC-10 (Ex and Exi), while all of the relations apart from the inverse pair are symmetric. This means that there is no significant benefit of using one predicate per inverse pair rather than two. At the same time, using two predicates allows us to apply the antisymmetric optimisation of Brenton et al.~\citeyear{Brenton2016} which should significantly improve performance since it reduces the possible relation pairs by half. For TC-6, antisymmetric optimisation amounts to replacing $X!=Z$ with $X<Z$ in the choice rule, while for TC-10, we have to also use different predicates for Ex and Exi and include two rules that generate an Exi relation for each known Ex relation and vice-versa. Note that in order for antisymmetric optimisation to be applied consistently, we need to ensure that all relations included in the predefined rules and facts have a first operand that is arithmetically (or lexicographically) before the second one.

Secondly, we removed all integrity constraints for pairwise disjointness, since the use of a choice rule accounts for that. Finally, we opted to use integrity constraints to model all composition entry tables and not just the ones that contain more than 7 relations. In this manner, there is only one simplified integrity constraint per each table entry, of the following form: $\leftarrow not \ r_1(X,Z), ..., not \ r_n(X,Z), r_a(X,Y), r_b(Y,Z)$ (negating only the relations contained in the entry). This decision is due to the fact that the disjunctive rules used in COI7 for entries with 7 or less relations can lead to an ASP program with head cycles. Deciding the existence of an answer set for such programs is $\Sigma^P_2$-complete in general, while the same problem is NP-complete in absence of head cycles~\cite{Eiter1997}. Answer set existence is guaranteed to be in NP for all encodings proposed in this paper. Following the naming convention of Brenton et al.~\citeyear{Brenton2016}, we name this encoding CTSA (Choice rule with Two predicates per pair, \textit{Simplified} integrity constraints and Antisymmetric optimisation).

A further improvement is possible if we reduce the workload of the ASP grounder by making sure that no grounding is performed for pairs of trajectories where the relation is known (i.e., is a fact). To achieve this, we distinguish between a known relation and a derived one. Instead of using an $r_i(X,Y)$ in both cases, as done in COI7 and CTSA, we use instead a predicate $fact(r_i,X,Y)$ for known relations and include the conjunct $\#count\{R : fact(R,X,Y)\} = 0$ at the end of the choice rule, to make sure that it is only applicable when the relation between two trajectories is not known. Clearly, the more relations are known, the more significant this improvement will be. We name this encoding CTSA2.

\subsection{Generalised ASP Encoding for Qualitative Calculi}
\label{sec:gen}

Based on the knowledge gained by the various encodings presented so far, we now follow a systematic approach that results in a generalised ASP encoding, applicable to any qualitative calculus whose definition includes a composition table.

The first step is to represent the domain: any element $x$ that is modelled by the calculus (e.g., trajectories for TC-6 and TC-10), is expressed using a fact $element(x)$. To represent the base relations of the calculus, instead of a predicate for each (since each calculus has its own relations), we define a predicate of arity 1, named \textit{relation} which is instantiated for each relation included in the calculus (e.g., $relation(eq)$, $relation(alt)$, and so on).

Then, to encode composition table entries, we first define a \textit{table} predicate with three arguments: $table(row\_relation,column\_relation,valid\_relation)$. For each cell in the composition table, we instantiate as many table predicates as the possible relations included in that cell. For example, to represent the (alt, i) cell in the composition table of TC-6, we need the following: $table(alt, i, i)$ and $table(alt, i, dis)$. These steps collectively store the knowledge contained in a composition table as a set of facts.

To implement qualitative reasoning based on the encoded table, we adopt the decision made for CTSA, to use integrity constraints to enforce every composition table entry. Since all table entries are represented by a table predicate, we need only include one generalised integrity constraint, using a conditional literal: $\leftarrow true(X,R_1,Y), true(Y,R_2,Z), not \ true(X,R_\mathit{out},Z) : table(R_1,R_2,R_\mathit{out}).$ Predicate \textit{true(X,R,Y)} states that relation R holds for trajectory pair (X,Y). Conditional literals are convenient here since we need to represent conjunctions involving a variable number of literals. This integrity constraint essentially states that, if $R_1$ and $R_2$ hold for $(X,Y)$ and $(Y,Z)$, respectively, then there must be at least one table entry $(R_1,R_2, R_\mathit{out})$ such that $R_{out}$ holds for $(X,Z)$.

To enforce the fact that the set of base relations is jointly exhaustive and pairwise disjoint, we include the choice rule $\{true(X,R,Y) : relation(R)\} = 1 \leftarrow element(X), element(Y), X != Y$. Additionally, to specify that one of the base relations of the calculus represents equality, we include a special rule of the form: $true(X,eq,X) \leftarrow element(X)$.

Finally, to represent input, we introduce a predicate of arity 3 named \textit{possible}: $possible(X,R,Y)$ states that the pair $(X,Y)$ is involved in a constraint, and $R$ is a possible relation.
An additional integrity constraint is necessary to relate instantiations of possible and true predicates: $\leftarrow possible(X,_,Y), not\ true(X,R,Y) : possible(X,R,Y).$

The aforementioned decisions that were necessary for this generalised encoding (named GEN) to be applicable to any qualitative calculus inevitably result in decreased performance, as analysed in Section~\ref{sec:eval}. The intention of the systematic approach described in this section is to provide a means of quickly producing a universally applicable ASP encoding, whose applicability can then be restricted to improve performance through calculus-specific optimisations.

\section{Evaluation}
\label{sec:eval}
To compare the efficiency of the proposed encodings, we conducted an experimental evaluation using the T-Drive dataset~\cite{Yuan2013}, which contains trajectories generated by 10,357 taxis in a week with about 15 million points and 9 million kilometres length of the total distance in Beijing. We first applied a data cleaning process to remove records missing latitude/longitude values or containing significantly infeasible values, effectively sieving out incomplete and noisy data. After this process, trajectories generated by 1000 cars are sampled from the clean dataset.
Each sampled trajectory is mapped from a sequence of latitude/longitude pairs into a sequence of corresponding regions within the boundaries formed by the maximum/minimum latitudes and longitudes of the targeted car's trajectory records, with the map of Beijing city having a comparative resolution of $100\times200$ regions. The choice of this resolution is informed by both the density of the latitude/longitude values within the Beijing city area and the trajectory records in the spatio-temporal domain. The resulting 1000 trajectories have a mean length of 282 regions with a standard deviation of 33.27. The dataset and code used in the experiments and all encodings are available at github.com/gmparg/ICLP2018. Partial encodings are also included in~\ref{sec:encod}.

We ran two different types of experiments for both TC-6 and TC-10. In the first type we kept the number of known relations fixed to one per distinct trajectory and varied the number of trajectories from 10 to 250. In the second type, we used a fixed number of 50 trajectories and varied the percentage of known relations from 6\% to 100\%. In both cases we used the ASP system clingo version 5.2.0~\cite{Gebser2016}, tasking it to derive a single solution. Time and memory values were calculated using pyrunlim (available at github.com/alviano/python/tree/master/pyrunlim). All experiments were performed on a Debian Linux server with an Intel\textregistered~Xeon\textregistered~X3430 CPU at 2.4GHz, with 16 GB RAM.

\begin{figure}
\centering
\subfloat{
\begin{tikzpicture}[node distance = 2cm, scale=0.8, transform shape]
\begin{axis}[xlabel= Trajectories,ylabel= CPU time (s),legend pos= north west]
    \addplot+[color=color1,mark=otimes, mark size=2,error bars/.cd, y dir=both,y explicit] coordinates {
    (10,0.108)
    (20,1.020)
    (30,3.620)
    (40,8.970)
    (50,18.300)
    (60,32.160)
    (70,53.600)
    (80,81.290)
    (90,121.550)
    (100,172.070)
    (110,238.000)
    };
    \addplot+[color=color2,mark=x, mark size=2,error bars/.cd, y dir=both,y explicit] coordinates {
    (10,0.016)
    (20,0.110)
    (30,0.340)
    (40,1.030)
    (50,2.100)
    (60,4.060)
    (70,6.390)
    (80,9.480)
    (90,13.620)
    (100,19.330)
    (110,24.960)
    (120,34.120)
    (130,45.300)
    (140,55.520)
    (150,69.870)
    (160,84.170)
    (170,103.620)
    (180,123.110)
    (190,155.010)
    (200,179.700)
    (210,203.370)
    (220,235.360)
    (230,271.610)
    (240,326.790)
    (250,372.550)
    };
    \addplot+[color=color3,mark=oplus, mark size=2, error bars/.cd, y dir=both,y explicit] coordinates {
    (10,0.016)
    (20,0.108)
    (30,0.424)
    (40,1.020)
    (50,2.100)
    (60,3.840)
    (70,5.870)
    (80,9.490)
    (90,13.620)
    (100,19.830)
    (110,25.980)
    (120,36.160)
    (130,46.380)
    (140,57.590)
    (150,74.980)
    (160,91.390)
    (170,112.890)
    (180,134.340)
    (190,168.280)
    (200,191.980)
    (210,227.990)
    (220,266.010)
    (230,307.380)
    (240,367.070)
    (250,414.610)
    };
    \addplot+[color=color4,mark=+, mark size=2, error bars/.cd, y dir=both,y explicit] coordinates {
    (10,0.140)
    (20,1.232)
    (30,4.368)
    (40,10.912)
    (50,22.016)
    (60,39.276)
    (70,64.156)
    (80,98.508)
    (90,140.988)
    (100,196.776)
    (110,266.244)
    (120,352)
    (130,458.848)
    (140,574.176)
    };
    \legend{COI7, CTSA, CTSA2, GEN}
    \end{axis}
\end{tikzpicture}
\label{fig:eval-tc6-d1time}
}\quad
\subfloat{
\begin{tikzpicture}[node distance = 2cm, scale=0.8, transform shape]
\begin{axis}[xlabel= Trajectories,ylabel= Memory (GB),legend pos= north west]
    \addplot+[color=color1,mark=otimes, mark size=2,error bars/.cd, y dir=both,y explicit] coordinates {
    (10,0)
    (20,0.086)
    (30,0.290)
    (40,0.668)
    (50,1.326)
    (60,2.299)
    (70,3.689)
    (80,5.486)
    (90,7.852)
    (100,10.842)
    (110,14.596)
    };
    \addplot+[color=color2,mark=x, mark size=2,error bars/.cd, y dir=both,y explicit] coordinates {
    (10,0)
    (20,0.0141)
    (30,0.028)
    (40,0.066)
    (50,0.123)
    (60,0.200)
    (70,0.316)
    (80,0.401)
    (90,0.669)
    (100,0.944)
    (110,1.212)
    (120,1.635)
    (130,2.039)
    (140,2.511)
    (150,3.194)
    (160,3.808)
    (170,4.519)
    (180,5.322)
    (190,6.488)
    (200,7.471)
    (210,8.561)
    (220,9.843)
    (230,11.055)
    (240,13.041)
    (250,14.601)
    };
    \addplot+[color=color3,mark=oplus, mark size=2, error bars/.cd, y dir=both,y explicit] coordinates {
    (10,0)
    (20,0)
    (30,0.026)
    (40,0.062)
    (50,0.113)
    (60,0.195)
    (70,0.307)
    (80,0.465)
    (90,0.652)
    (100,0.928)
    (110,1.202)
    (120,1.608)
    (130,2.013)
    (140,2.474)
    (150,3.148)
    (160,3.757)
    (170,4.460)
    (180,5.257)
    (190,6.420)
    (200,7.396)
    (210,8.476)
    (220,9.662)
    (230,10.952)
    (240,12.927)
    (250,14.477)
    };
    \addplot+[color=color4,mark=+, mark size=2, error bars/.cd, y dir=both,y explicit] coordinates {
    (10,0.014)
    (20,0.054)
    (30,0.160)
    (40,0.374)
    (50,0.664)
    (60,1.145)
    (70,1.870)
    (80,2.775)
    (90,3.762)
    (100,5.195)
    (110,6.976)
    (120,8.738)
    (130,11.499)
    (140,14.449)
    };
    \legend{COI7, CTSA, CTSA2, GEN}
    \end{axis}
\end{tikzpicture}
\label{fig:eval-tc6-d1mem}
}
\caption{TC-6: Finding a consistent configuration when only one relation per trajectory is known.}
\label{fig:eval-tc6-d1}
\end{figure}
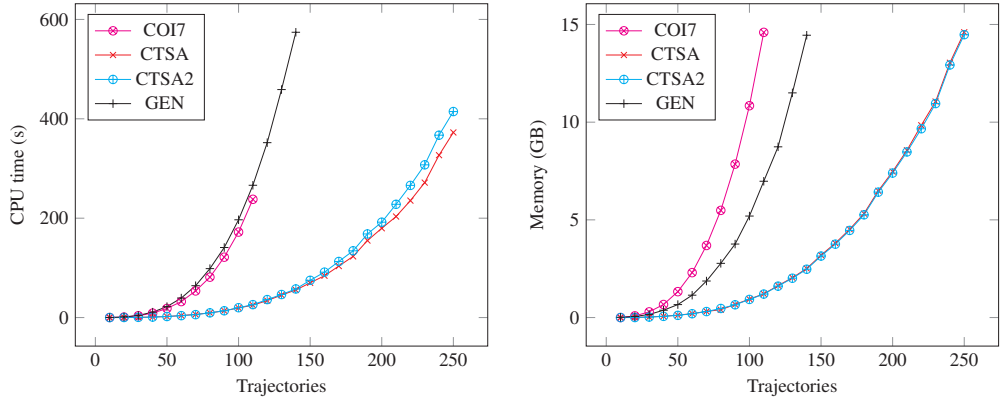

The results of the first TC-6 experiment are shown in Figure~\ref{fig:eval-tc6-d1}. The improved encodings allow the ASP solver to derive a consistent configuration for up to 250 trajectories, when the COI7 encoding can only yield results for up to 110 trajectories, before running out of memory. The GEN encoding is slightly slower than COI7 but memory consumption is slightly better, allowing results for up to 140 trajectories. CTSA and CTSA2 are significantly faster, achieving execution times one order of magnitude less than COI7 and GEN. CTSA2 performs similarly to CTSA in terms of memory consumption, while it is slightly slower for more than 100 trajectories. This is expected since in this experiment the input contains only a single relation per distinct pair of trajectories, hence the improvements of CTSA2 have an insignificant positive effect to memory consumption and the auxiliary predicate it employs adds unnecessary complexity which has a slightly negative effect on execution time for larger trajectory sets.

\begin{figure}
\centering
\subfloat{
\begin{tikzpicture}[node distance = 2cm, scale=0.78, transform shape]
\begin{axis}[xlabel= Number of known relations per trajectory,ylabel= CPU time (s),legend style={at={(0.32,0.54)}}]
    \addplot+[color=color1,mark=otimes, mark size=2,error bars/.cd, y dir=both,y explicit] coordinates {
    (3,19.740)
    (5,18.840)
    (10,16.750)
    (15,15.210)
    (20,13.640)
    (25,12.090)
    (30,11.570)
    (35,11.560)
    (40,10.530)
    (45,10.010)
    (50,8.980)
    };
    \addplot+[color=color2,mark=x, mark size=2,error bars/.cd, y dir=both,y explicit] coordinates {
    (3,2.090)
    (5,2.100)
    (10,2.100)
    (15,2.090)
    (20,2.100)
    (25,1.880)
    (30,1.880)
    (35,1.880)
    (40,1.450)
    (45,1.230)
    (50,1.020)
    };
    \addplot+[color=color3,mark=oplus, mark size=2, error bars/.cd, y dir=both,y explicit] coordinates {
    (3,1.890)
    (5,1.660)
    (10,1.230)
    (15,0.910)
    (20,0.680)
    (25,0.220)
    (30,0.220)
    (35,0.220)
    (40,0.110)
    (45,0.108)
    (50,0.032)
    };
    \addplot+[color=color4,mark=+, mark size=2, error bars/.cd, y dir=both,y explicit] coordinates {
    (3,22.240)
    (5,22.452)
    (10,22.692)
    (15,22.664)
    (20,22.596)
    (25,22.280)
    (30,22.184)
    (35,22.048)
    (40,21.696)
    (45,21.256)
    (50,20.800)
    };
    \legend{COI7, CTSA, CTSA2, GEN}
    \end{axis}
\end{tikzpicture}
\label{fig:eval-tc6-50time}
}\quad
\subfloat{
\begin{tikzpicture}[node distance = 2cm, scale=0.78, transform shape]
\begin{axis}[xlabel= Number of known relations per trajectory,ylabel= Memory (MB),legend pos= north east]
    \addplot+[color=color1,mark=otimes, mark size=2,error bars/.cd, y dir=both,y explicit] coordinates {
    (3,1262.2)
    (5,1152.2)
    (10,900.3)
    (15,700.6)
    (20,585.4)
    (25,411.5)
    (30,408.2)
    (35,398.3)
    (40,382.8)
    (45,359.1)
    (50,336.7)
    };
    \addplot+[color=color2,mark=x, mark size=2,error bars/.cd, y dir=both,y explicit] coordinates {
    (3,125.3)
    (5,124.3)
    (10,117.3)
    (15,107.4)
    (20,108.3)
    (25,105.4)
    (30,103.6)
    (35,97.7)
    (40,84.6)
    (45,72.2)
    (50,60)
    };
    \addplot+[color=color3,mark=oplus, mark size=2, error bars/.cd, y dir=both,y explicit] coordinates {
    (3,106.2)
    (5,101)
    (10,72.9)
    (15,50.6)
    (20,39.4)
    (25,20.9)
    (30,20.5)
    (35,20.3)
    (40,13.4)
    (45,0)
    (50,0)
    };
    \addplot+[color=color4,mark=+, mark size=2, error bars/.cd, y dir=both,y explicit] coordinates {
    (3,676.6)
    (5,666.4)
    (10,666.3)
    (15,681.2)
    (20,672)
    (25,658.1)
    (30,655.5)
    (35,650.6)
    (40,647.7)
    (45,635.3)
    (50,623.2)
    };
    \legend{COI7, CTSA, CTSA2, GEN}
    \end{axis}
\end{tikzpicture}
\label{fig:eval-tc6-50mem}
}
\caption{TC-6: Determining consistency for 50 trajectories.}
\label{fig:eval-tc6-50}
\end{figure}
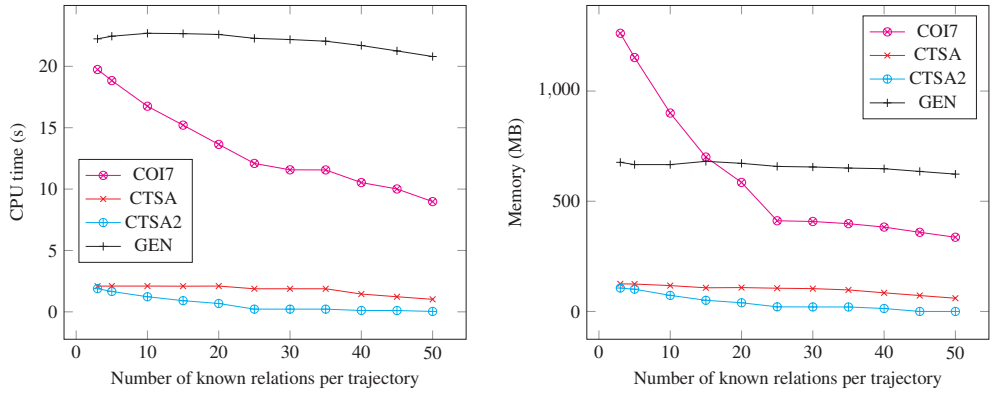

The benefits of CTSA2 are better illustrated in the second TC-6 experiment (Figure~\ref{fig:eval-tc6-50}). As the number of known relations per trajectory increases, both execution time and memory decrease faster for CTSA2 than CTSA and GEN, up to an 88\% decrease in execution time and a 80\% decrease in memory consumption for the case of knowing 25 (out of 50 possible) relations per trajectory. Afterwards, the improvement of CTSA2 is less pronounced, since the task becomes increasingly easier as less relations are unknown. COI7 exhibits a similar behaviour: execution time and memory rapidly decrease up to the case of knowing 25 relations per trajectory, after which the decrease is more gradual. With regard to GEN, while the variation is not significant, we do observe a slight increase (up to roughly 3\%) in execution time at low percentages of known relations and a similar decrease afterwards.

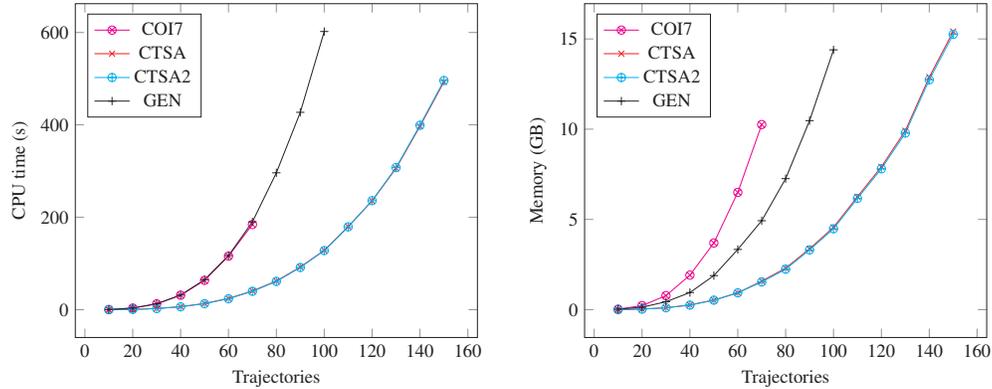
\begin{figure}
\centering
\subfloat{
\begin{tikzpicture}[node distance = 2cm, scale=0.78, transform shape]
\begin{axis}[xlabel= Trajectories,ylabel= CPU time (s),legend pos= north west]
    \addplot+[color=color1,mark=otimes, mark size=2,error bars/.cd, y dir=both,y explicit] coordinates {
    (10,0.360)
    (20,3.484)
    (30,12.708)
    (40,31.384)
    (50,63.432)
    (60,115.996)
    (70,184.356)
    };
    \addplot+[color=color2,mark=x, mark size=2,error bars/.cd, y dir=both,y explicit] coordinates {
    (10,0.072)
    (20,0.652)
    (30,2.564)
    (40,6.564)
    (50,13.436)
    (60,24.400)
    (70,40.480)
    (80,62.044)
    (90,92.356)
    (100,128.404)
    (110,179.788)
    (120,235.504)
    (130,306.044)
    (140,396.896)
    (150,492.976)
    };
    \addplot+[color=color3,mark=oplus, mark size=2, error bars/.cd, y dir=both,y explicit] coordinates {
    (10,0.06)
    (20,0.588)
    (30,2.404)
    (40,6.284)
    (50,12.996)
    (60,23.776)
    (70,39.796)
    (80,61.224)
    (90,91.420)
    (100,127.944)
    (110,179.356)
    (120,235.976)
    (130,307.564)
    (140,399.212)
    (150,495.996)
    };
     \addplot+[color=color4,mark=+, mark size=2, error bars/.cd, y dir=both,y explicit] coordinates {
    (10,0.384)
    (20,3.532)
    (30,12.748)
    (40,31.680)
    (50,64.916)
    (60,116.944)
    (70,190.184)
    (80,296.292)
    (90,427.196)
    (100,602.076)
    };
    \legend{COI7, CTSA, CTSA2, GEN}
    \end{axis}
\end{tikzpicture}
\label{fig:eval-tc10-d1time}
}\quad
\subfloat{
\begin{tikzpicture}[node distance = 2cm, scale=0.78, transform shape]
\begin{axis}[xlabel= Trajectories,ylabel= Memory (GB),legend pos= north west]
    \addplot+[color=color1,mark=otimes, mark size=2,error bars/.cd, y dir=both,y explicit] coordinates {
    (10,0.028)
    (20,0.223)
    (30,0.779)
    (40,1.914)
    (50,3.695)
    (60,6.498)
    (70,10.255)
    };
    \addplot+[color=color2,mark=x, mark size=2,error bars/.cd, y dir=both,y explicit] coordinates {
    (10,0)
    (20,0.032)
    (30,0.105)
    (40,0.263)
    (50,0.527)
    (60,0.934)
    (70,1.580)
    (80,2.301)
    (90,3.373)
    (100,4.562)
    (110,6.258)
    (120,7.916)
    (130,9.915)
    (140,12.885)
    (150,15.411)
    };
    \addplot+[color=color3,mark=oplus, mark size=2, error bars/.cd, y dir=both,y explicit] coordinates {
    (10,0)
    (20,0.030)
    (30,0.105)
    (40,0.251)
    (50,0.526)
    (60,0.933)
    (70,1.537)
    (80,2.242)
    (90,3.310)
    (100,4.482)
    (110,6.166)
    (120,7.804)
    (130,9.782)
    (140,12.735)
    (150,15.252)
    };
    \addplot+[color=color4,mark=+, mark size=2, error bars/.cd, y dir=both,y explicit] coordinates {
    (10,0.025)
    (20,0.138)
    (30,0.436)
    (40,0.949)
    (50,1.887)
    (60,3.338)
    (70,4.929)
    (80,7.260)
    (90,10.474)
    (100,14.395)
    };
    \legend{COI7, CTSA, CTSA2, GEN}
    \end{axis}
\end{tikzpicture}
\label{fig:eval-tc10-d1mem}
}
\caption{TC-10: Finding a consistent configuration when only one relation per trajectory is known.}
\label{fig:eval-tc10-d1}
\end{figure}

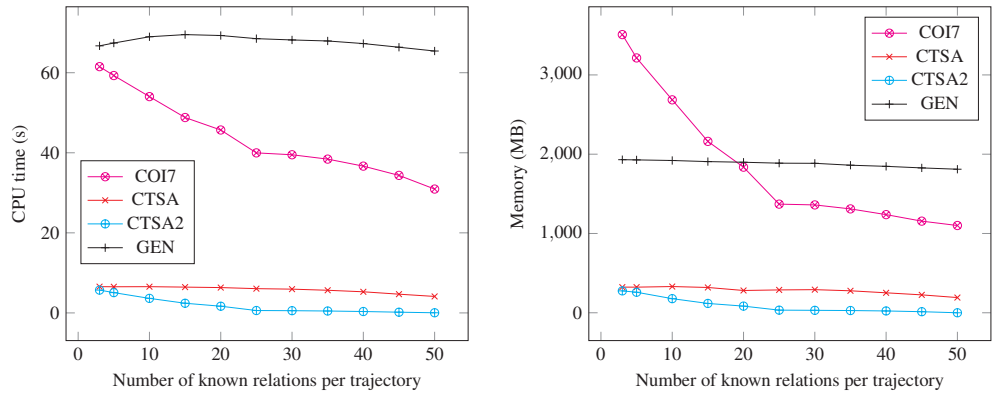
\begin{figure}
\centering
\subfloat{
\begin{tikzpicture}[node distance = 2cm, scale=0.78, transform shape]
\begin{axis}[xlabel= Number of known relations per trajectory,ylabel= CPU time (s),legend style={at={(0.32,0.54)}}]
    \addplot+[color=color1,mark=otimes, mark size=2,error bars/.cd, y dir=both,y explicit] coordinates {
    (3,61.516)
    (5,59.292)
    (10,54.028)
    (15,48.804)
    (20,45.692)
    (25,39.976)
    (30,39.504)
    (35,38.416)
    (40,36.664)
    (45,34.352)
    (50,30.936)
    };
    \addplot+[color=color2,mark=x, mark size=2,error bars/.cd, y dir=both,y explicit] coordinates {
    (3,6.560)
    (5,6.532)
    (10,6.554)
    (15,6.444)
    (20,6.328)
    (25,6.060)
    (30,5.936)
    (35,5.656)
    (40,5.276)
    (45,4.680)
    (50,4.116)
    };
    \addplot+[color=color3,mark=oplus, mark size=2, error bars/.cd, y dir=both,y explicit] coordinates {
    (3,5.700)
    (5,5.068)
    (10,3.632)
    (15,2.416)
    (20,1.672)
    (25,0.596)
    (30,0.557)
    (35,0.476)
    (40,0.348)
    (45,0.180)
    (50,0.036)
    };
    \addplot+[color=color4,mark=+, mark size=2, error bars/.cd, y dir=both,y explicit] coordinates {
    (3,66.712)
    (5,67.412)
    (10,68.980)
    (15,69.520)
    (20,69.292)
    (25,68.508)
    (30,68.184)
    (35,67.928)
    (40,67.280)
    (45,66.364)
    (50,65.420)
    };
    \legend{COI7, CTSA, CTSA2, GEN}
    \end{axis}
\end{tikzpicture}
\label{fig:eval-tc10-50time}
}\quad
\subfloat{
\begin{tikzpicture}[node distance = 2cm, scale=0.78, transform shape]
\begin{axis}[xlabel= Number of known relations per trajectory,ylabel= Memory (MB),legend pos= north east]
    \addplot+[color=color1,mark=otimes, mark size=2,error bars/.cd, y dir=both,y explicit] coordinates {
    (3,3508.5)
    (5,3214.3)
    (10,2685.6)
    (15,2161.7)
    (20,1836.4)
    (25,1370.2)
    (30,1360.3)
    (35,1310.1)
    (40,1237.9)
    (45,1155.7)
    (50,1100.6)
    };
    \addplot+[color=color2,mark=x, mark size=2,error bars/.cd, y dir=both,y explicit] coordinates {
    (3,323.4)
    (5,323.1)
    (10,330.8)
    (15,319)
    (20,281.1)
    (25,288.2)
    (30,291.5)
    (35,277.2)
    (40,251.7)
    (45,225.7)
    (50,191.8)
    };
    \addplot+[color=color3,mark=oplus, mark size=2, error bars/.cd, y dir=both,y explicit] coordinates {
    (3,276)
    (5,259)
    (10,177.9)
    (15,117.6)
    (20,83.9)
    (25,33.2)
    (30,30.8)
    (35,28.1)
    (40,22.9)
    (45,13.5)
    (50,0)
    };
\addplot+[color=color4,mark=+, mark size=2, error bars/.cd, y dir=both,y explicit] coordinates {
    (3,1930.8)
    (5,1928.2)
    (10,1920.6)
    (15,1906.7)
    (20,1898.5)
    (25,1886.8)
    (30,1885.1)
    (35,1861.2)
    (40,1846.8)
    (45,1827.1)
    (50,1810.5)
    };
    \legend{COI7, CTSA, CTSA2, GEN}
    \end{axis}
\end{tikzpicture}
\label{fig:eval-tc10-50mem}
}
\caption{TC-10: Determining consistency for 50 trajectories.}
\label{fig:eval-tc10-50}
\end{figure}

The results for the first TC-10 experiment are shown in Figure~\ref{fig:eval-tc10-d1}; as expected, the behaviour is similar to TC-6, with increased costs in time and memory due to the increased complexity. The CTSA and CTSA2 encodings manage to yield results for up to 150 trajectories, while COI7 stops at 70 and GEN stops at 100, both due to running out of memory. The slight variation in execution time between CTSA and CTSA2 that was evident in TC-6 is not noticeable in TC-10 since the higher complexity of the additional 4 relations does not allow executions for more than 150 trajectories, where the variation would be more evident. The results are also similar in the second TC-10 experiment, shown in Figure~\ref{fig:eval-tc10-50}, where the improvement of CTSA2 over CTSA is again increasingly evident as the percentage of known relations increases and peaks at the case of knowing 25 out of 50 relations per trajectory. Finally, GEN again shows little variation, with a slight increase of up to 6.5\%, followed by a slight decrease.


\section{Conclusion and Future Work}
\label{sec:concl}

In this paper, we proposed two qualitative calculi for trajectories defined as sequences of non-overlapping regions on partitioned maps. We then detailed the implementation of these calculi using ASP, applying several optimisations that allow the best performing implementation to scale up to an input of 250 trajectories, with each trajectory consisting of more than 50 regions on average. We also provided a generalised ASP encoding that is applicable to any qualitative calculus that includes a composition table in its definition. We expect the calculi and their implementations to prove useful in applications relying on reasoning over large trajectory databases, as well as in related attempts to implement qualitative reasoning using logic programming.

Future research directions include: (a) exploring a third, more restrictive trajectory calculus where no cycles are allowed, meaning that any region may be visited only once within a given trajectory; (b) determining whether the improvements achieved in the implementation of TC-6 and TC-10 as ASP programs can be carried over to ASP implementations of other qualitative calculi such as the Region Connection Calculus or Allen's interval algebra; (c) extending the calculus with a temporal dimension by combining it with Allen's interval algebra; (d) investigating the trade-off between performance of the generic encoding and its range of applicability.

\appendix


\section{Proofs}
\label{sec:proofs}

We start by making the link between models of qualitative calculi and answer sets of GEN explicit, that is, by providing a mapping between them.

\begin{lemma}\label{lem:qc}
GEN encodes QC model existence.
\end{lemma}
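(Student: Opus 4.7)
The plan is to establish a one-to-one correspondence between models of a QC instance $(\mathcal{E},\mathcal{R},c,\mathcal{C})$ and answer sets of GEN when instantiated with the facts $element(x)$ for $x \in \mathcal{E}$, $relation(r)$ for $r \in \mathcal{R}$, $table(r_1,r_2,r_{out})$ for all $r_{out} \in c(r_1,r_2)$, and $possible(x,r,y)$ for all $(x,y) \in R$ in $\mathcal{C}$ with $r \in R$. Since the stated lemma is qualitative ("encodes"), I would read it as: the QC instance admits a model if and only if the corresponding GEN program has an answer set, and moreover the $true$-atoms in an answer set exactly describe a model.

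First I would fix the forward direction. Given a model $\nu$ satisfying (i)--(iii) of Definition~\ref{def:problem}, form the interpretation $I_\nu$ consisting of the facts listed above together with $\{true(x,r,y) \mid \nu(x,y)=r\}$. I would verify that $I_\nu$ is an answer set by checking each rule of GEN: the choice rule $\{true(X,R,Y) : relation(R)\} = 1$ is satisfied because $\nu$ assigns a unique relation to every ordered pair of distinct elements; the rule $true(X,eq,X) \leftarrow element(X)$ is satisfied by property (i); the composition integrity constraint is satisfied because, for each $x,y,z$, property (ii) ensures some $r_{out} \in c(\nu(x,y),\nu(y,z))$ with $\nu(x,z)=r_{out}$, so a matching $table$ fact together with $true(x,r_{out},z)$ exists; finally, the input constraint is satisfied because property (iii) forces $\nu(x,y) \in R$ whenever $(x,y) \in R$ is in $\mathcal{C}$. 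Minimality follows from the fact that the only non-fact rules producing $true$-atoms are the choice rule and the equality rule, both of which just support the chosen atoms.

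For the reverse direction, given an answer set $I$ of the GEN program, define $\nu(x,y)$ to be the unique $r$ with $true(x,r,y) \in I$ (uniqueness being enforced by the $=\!1$ cardinality bound of the choice rule, with the special rule covering $x=y$). I would then read off properties (i)--(iii) directly from the rule $true(X,eq,X) \leftarrow element(X)$, the composition constraint, and the input constraint, respectively. The main obstacle I anticipate is handling the semantics of the conditional literals carefully: the body literal $not\ true(X,R_{out},Z) : table(R_1,R_2,R_{out})$ expands under the grounding of Gebser et al.\ (2015) to a conjunction over all matching $R_{out}$, so the constraint fires precisely when $true(x,r_1,y)$ and $true(y,r_2,z)$ hold and $true(x,r_{out},z)$ fails for every $r_{out} \in c(r_1,r_2)$; verifying this expansion and checking that the analogous expansion in the input constraint matches condition (iii) are the subtle points of the argument, but once the expansions are written out the equivalences collapse into the QC properties.
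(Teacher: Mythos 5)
Your proposal is correct and takes essentially the same approach as the paper: the paper's own proof constructs exactly the same interpretation $I_\nu$ from a given assignment $\nu$ and then simply asserts the biconditional ``$\nu$ is a model iff $I_\nu$ is an answer set'' without further justification. Your version is in fact more detailed than the published one, since you additionally spell out the rule-by-rule verification, the minimality argument, and the expansion of the conditional literals, all of which the paper leaves implicit.
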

\begin{proof}
Let $(\mathcal{E},\mathcal{R},c,\mathcal{C})$ be an instance of QC model existence, and $\Pi$ be its GEN encoding.
Let $\nu : \mathcal{E} \times \mathcal{E} \to \mathcal{R}$ be an assignment.
We build interpretation $I$ containing the following atoms:
for all $x \in \mathcal{E}$, $\mathit{element}(x)$;
for all $r \in \mathcal{R}$, $\mathit{relation}(r)$;
for all $r,r',r'' \in \mathcal{R}$ such that $r'' \in c(r,r')$, $\mathit{table}(r,r',r'')$;
for all $x \in \mathcal{E}$, $\mathit{true}(x,\mathit{eq},x)$;
for all $(x,y) \in R$ in $\mathcal{C}$, and for all $r \in R$, $\mathit{possible}(x,r,y)$;
for all $x,y \in E$, if $\nu(x,y) = r$, then $\mathit{true}(x,r,y)$.
It holds that $\nu$ is a model of $(\mathcal{E},\mathcal{R},c,\mathcal{C})$ if and only if $I$ is an answer set of $\Pi$.
\hfill
\end{proof}

Membership in NP is a consequence of the fact that coherence of GEN can be checked by an NP procedure.

\ThmNP*
\begin{proof}
The claim follows from Lemma~\ref{lem:qc} and by the fact that answer set existence of ASP programs containing only choice rules and integrity constraints belongs to NP  \cite{DBLP:conf/slp/MarekT89,DBLP:journals/jlp/CadoliS93}.
\hfill
\end{proof}

\section{Partial ASP Encodings}
\label{sec:encod}

\begin{figure} [!h]
\footnotesize
\figrule
\begin{verbatim}
{s(X,Y); f(X,Y); alt(X,Y); i(X,Y); eq(X,Y); dis(X,Y)}=1 :- traj(X), traj(Y), X!=Y.
eq(X,X) :- traj(X).
eq(Z,X) :- eq(Y,X), eq(Z,Y).                 f(X,Z) :- f(X,Y), eq(Z,Y).
alt(Z,X) :- eq(Y,X), alt(Z,Y).               f(X,Z) :- f(X,Y), alt(Z,Y).
s(X,Z) :- eq(Y,X), s(Y,Z).                   i(X,Z) | dis(Z,X) :- f(X,Y), s(Y,Z).
f(X,Z) :- eq(Y,X), f(Y,Z).                   eq(Z,X) | alt(Z,X) | f(X,Z) :- f(X,Y), f(Y,Z).
i(X,Z) :- eq(Y,X), i(Y,Z).                   s(X,Z) | i(X,Z) | dis(Z,X) :- f(X,Y), i(Y,Z).
dis(Z,X) :- eq(Y,X), dis(Z,Y).               s(X,Z) | i(X,Z) | dis(Z,X) :- f(X,Y), dis(Z,Y).
:- eq(Z,X), s(X,Z).                          :- f(X,Z), s(X,Z).
:- eq(Z,X), f(X,Z).                          :- f(X,Z), alt(Z,X).
:- eq(Z,X), alt(Z,X).                        :- f(X,Z), i(X,Z).
:- eq(Z,X), i(X,Z).                          :- f(X,Z), eq(Z,X).
:- eq(Z,X), dis(Z,X).                        :- f(X,Z), dis(Z,X).
\end{verbatim}
\caption{Partial COI7 encoding for TC-6.}\label{fig:tc6coi7}
\figrule
\end{figure}

\begin{figure} [p]
\footnotesize
\figrule
\begin{verbatim}
{s(X,Y); f(X,Y); alt(X,Y); i(X,Y); eq(X,Y); dis(X,Y)}=1 :- traj(X), traj(Y), X<Y.
eq(X,X) :- traj(X).
:- eq(X,Y), eq(Y,Z), not eq(X,Z).   :- f(X,Y), eq(Y,Z), not f(X,Z).
:- eq(X,Y), alt(Y,Z), not alt(X,Z). :- f(X,Y), alt(Y,Z), not f(X,Z).
:- eq(X,Y), s(Y,Z), not s(X,Z).     :- f(X,Y), s(Y,Z), not i(X,Z), not dis(X,Z).
:- eq(X,Y), f(Y,Z), not f(X,Z).     :- f(X,Y), f(Y,Z), not eq(X,Z), not alt(X,Z), not f(X,Z).
:- eq(X,Y), i(Y,Z), not i(X,Z).     :- f(X,Y), i(Y,Z), not s(X,Z), not i(X,Z), not dis(X,Z).
:- eq(X,Y), dis(Y,Z), not dis(X,Z). :- f(X,Y), dis(Y,Z), not s(X,Z), not i(X,Z), not dis(X,Z).
\end{verbatim}
\caption{Partial CTSA encoding for TC-6.}\label{fig:tc6ctsa}
\figrule
\end{figure}

\begin{figure} [p]
\footnotesize
\begin{verbatim}
{s(X,Y); f(X,Y); alt(X,Y); i(X,Y); eq(X,Y); dis(X,Y)}=1 :- traj(X), traj(Y), X<Y,
 #count{R : fact(R,X,Y)} = 0.
eq(X,X) :- traj(X).
:- eq(X,Y), eq(Y,Z), not eq(X,Z).   :- f(X,Y), eq(Y,Z), not f(X,Z).
:- eq(X,Y), alt(Y,Z), not alt(X,Z). :- f(X,Y), alt(Y,Z), not f(X,Z).
:- eq(X,Y), s(Y,Z), not s(X,Z).     :- f(X,Y), s(Y,Z), not i(X,Z), not dis(X,Z).
:- eq(X,Y), f(Y,Z), not f(X,Z).     :- f(X,Y), f(Y,Z), not eq(X,Z), not alt(X,Z), not f(X,Z).
:- eq(X,Y), i(Y,Z), not i(X,Z).     :- f(X,Y), i(Y,Z), not s(X,Z), not i(X,Z), not dis(X,Z).
:- eq(X,Y), dis(Y,Z), not dis(X,Z). :- f(X,Y), dis(Y,Z), not s(X,Z), not i(X,Z), not dis(X,Z).
eq(X,Y) :- fact(eq,X,Y).
f(X,Y) :- fact(f,X,Y).
\end{verbatim}
\caption{Partial CTSA2 encoding for TC-6.}\label{fig:tc6ctsa2}
\figrule
\end{figure}

\begin{figure} [p]
\footnotesize
\begin{verbatim}
{true(X,R,Y) : relation(R)} = 1 :- element(X); element(Y); X != Y.
true(X,eq,X) :- element(X).
:- true(X,R1,Y); true(Y,R2,Z); not true(X,Rout,Z) : table(R1,R2,Rout).
:- possible(X,_,Y); not true(X,R,Y) : possible(X,R,Y).
relation(eq; alt; s; f; i; dis).
table(eq, eq, (eq)).                    table(f, eq, (f)).
table(eq, alt, (alt)).                  table(f, alt, (f)).
table(eq, s, (s)).                      table(f, s, (i;dis)).
table(eq, f, (f)).                      table(f, f, (eq;alt;f)).
table(eq, i, (i)).                      table(f, i, (s;i;dis)).
table(eq, dis, (dis)).                  table(f, dis, (s;i;dis)).
\end{verbatim}
\caption{Partial GEN encoding for TC-6.}\label{fig:tc6gen}
\figrule
\end{figure}

Figures in this section include parts of all encodings mentioned in the manuscript. In all cases, only the code that encodes the Equals and Finishes relations is shown. The remaining relations are encoded in a similar manner. Complete versions of the encodings are available at https://github.com/gmparg/ICLP2018. Figures~\ref{fig:tc6coi7},~\ref{fig:tc6ctsa},~\ref{fig:tc6ctsa2} and~\ref{fig:tc6gen} contain TC-6 encodings according to COI7, CTSA, CTSA2 and GEN, respectively. Figures~\ref{fig:tc10coi7},~\ref{fig:tc10ctsa},~\ref{fig:tc10ctsa2} and~\ref{fig:tc10gen} contain TC-10 encodings according to COI7, CTSA, CTSA2 and GEN, respectively.

\clearpage

\begin{figure} [p]
\footnotesize
\figrule
\begin{verbatim}
{ s(X,Z) ; f(X,Z) ; ex(X,Z); ex(Z,X) ; alt(X,Z) ; ret(X,Z) ; rev(X,Z) ;  i(X,Z) ; eq(X,Z) ;
 dis(X,Z) }=1 :- traj(X), traj(Z), X!=Z.
eq(X,X) :- traj(X).
eq(X,Z) :- eq(X,Y), eq(Y,Z).     f(X,Z) :- f(X,Y), eq(Y,Z).
rev(X,Z) :- eq(X,Y), rev(Y,Z).   ex(Z,X) :- f(X,Y), rev(Y,Z).
alt(X,Z) :- eq(X,Y), alt(Y,Z).   f(X,Z) :- f(X,Y), alt(Y,Z).
ret(X,Z) :- eq(X,Y), ret(Y,Z).   ex(Z,Y) :- f(X,Y), ret(Y,Z).
s(X,Z) :- eq(X,Y), s(Y,Z).       ex(X,Z) | i(X,Z) | dis(X,Z) :- f(X,Y), s(Y,Z).
f(X,Z) :- eq(X,Y), f(Y,Z).       eq(X,Z) | alt(X,Z) | f(X,Z) :- f(X,Y), f(Y,Z).
ex(X,Z) :- eq(X,Y), ex(Y,Z).     s(X,Z) | i(X,Z) | dis(X,Z):- f(X,Y), ex(Y,Z).
ex(Z,X) :- eq(X,Y), ex(Z,Y).     rev(X,Z) | ret(X,Z) | ex(Z,X) :- f(X,Y), ex(Z,Y).
i(X,Z) :- eq(X,Y), i(Y,Z).       s(X,Z) | ex(X,Z) | i(X,Z) | dis(X,Z) :- f(X,Y), i(Y,Z).
dis(X,Z) :- eq(X,Y), dis(Y,Z).   s(X,Z) | ex(X,Z) | i(X,Z) | dis(X,Z) :- f(X,Y), dis(Y,Z).
:- eq(X,Z), alt(X,Z).            :- f(X,Z), alt(X,Z).
:- eq(X,Z), i(X,Z).              :- f(X,Z), i(X,Z).
:- eq(X,Z), s(X,Z).              :- f(X,Z), eq(X,Z).
:- eq(X,Z), f(X,Z).              :- f(X,Z), dis(X,Z).
:- eq(X,Z), dis(X,Z).            :- f(X,Z), ex(X,Z).
:- eq(X,Z), ex(X,Z).             :- f(X,Z), ex(Z,X).
:- eq(X,Z), ex(Z,X).             :- f(X,Z), rev(X,Z).
:- eq(X,Z), rev(X,Z).            :- f(X,Z), ret(X,Z).
:- eq(X,Z), ret(X,Z).            :- f(X,Z), s(X,Z).
\end{verbatim}
\caption{Partial COI7 encoding for TC-10.}\label{fig:tc10coi7}
\figrule
\end{figure}

\begin{figure} [p]
\footnotesize
\begin{verbatim}
{s(X,Y); f(X,Y); ex(X,Y); exi(X,Y); alt(X,Y); ret(X,Y); rev(X,Y); i(X,Y); eq(X,Y);
 dis(X,Y)}=1 :- traj(X), traj(Y), X<Y.
eq(X,X) :- traj(X).
:- eq(X,Y), eq(Y,Z), not eq(X,Z).   :- f(X,Y), eq(Y,Z), not f(X,Z).
:- eq(X,Y), rev(Y,Z),not rev(X,Z).  :- f(X,Y), rev(Y,Z), not exi(X,Z).
:- eq(X,Y), alt(Y,Z), not alt(X,Z). :- f(X,Y), alt(Y,Z), not f(X,Z).
:- eq(X,Y), ret(Y,Z), not ret(X,Z). :- f(X,Y), ret(Y,Z), not exi(X,Z).
:- eq(X,Y), s(Y,Z), not s(X,Z).     :- f(X,Y), s(Y,Z), not ex(X,Z), not i(X,Z), not dis(X,Z).
:- eq(X,Y), f(Y,Z), not f(X,Z).     :- f(X,Y), f(Y,Z), not eq(X,Z), not alt(X,Z), not f(X,Z).
:- eq(X,Y), ex(Y,Z), not ex(X,Z).   :- f(X,Y), ex(Y,Z), not s(X,Z), not i(X,Z), not dis(X,Z).
:- eq(X,Y), exi(Y,Z), not exi(X,Z).
:- eq(X,Y), i(Y,Z), not i(X,Z).
:- eq(X,Y), dis(Y,Z), not dis(X,Z).
:- f(X,Y), exi(Y,Z), not rev(X,Z), not ret(X,Z), not exi(X,Z).
:- f(X,Y), i(Y,Z), not s(X,Z), not ex(X,Z), not i(X,Z), not dis(X,Z).
:- f(X,Y), dis(Y,Z), not s(X,Z), not ex(X,Z), not i(X,Z), not dis(X,Z).
exi(X,Y) :- ex(Y,X), Y<X.
ex(X,Y) :- exi(Y,X), Y<X.
\end{verbatim}
\caption{Partial CTSA encoding for TC-10.}\label{fig:tc10ctsa}
\figrule
\end{figure}

\clearpage

\begin{figure}
\footnotesize
\figrule
\begin{verbatim}
{s(X,Y); f(X,Y); ex(X,Y); exi(X,Y); alt(X,Y); ret(X,Y); rev(X,Y); i(X,Y); eq(X,Y);
 dis(X,Y)}=1 :- traj(X), traj(Y), X<Y, #count{R : fact(R,X,Y)} = 0.
eq(X,X) :- traj(X).
:- eq(X,Y), eq(Y,Z), not eq(X,Z).   :- f(X,Y), s(Y,Z), not ex(X,Z), not i(X,Z), not dis(X,Z).
:- eq(X,Y), rev(Y,Z),not rev(X,Z).  :- f(X,Y), f(Y,Z), not eq(X,Z), not alt(X,Z), not f(X,Z).
:- eq(X,Y), alt(Y,Z), not alt(X,Z). :- f(X,Y), ex(Y,Z), not s(X,Z), not i(X,Z), not dis(X,Z).
:- eq(X,Y), ret(Y,Z), not ret(X,Z).
:- eq(X,Y), s(Y,Z), not s(X,Z).
:- eq(X,Y), f(Y,Z), not f(X,Z).
:- eq(X,Y), ex(Y,Z), not ex(X,Z).
:- eq(X,Y), exi(Y,Z), not exi(X,Z).
:- eq(X,Y), i(Y,Z), not i(X,Z).
:- eq(X,Y), dis(Y,Z), not dis(X,Z).
:- f(X,Y), eq(Y,Z), not f(X,Z).
:- f(X,Y), rev(Y,Z), not exi(X,Z).
:- f(X,Y), alt(Y,Z), not f(X,Z).
:- f(X,Y), ret(Y,Z), not exi(X,Z).
:- f(X,Y), exi(Y,Z), not rev(X,Z), not ret(X,Z), not exi(X,Z).
:- f(X,Y), i(Y,Z), not s(X,Z), not ex(X,Z), not i(X,Z), not dis(X,Z).
:- f(X,Y), dis(Y,Z), not s(X,Z), not ex(X,Z), not i(X,Z), not dis(X,Z).
eq(X,Y) :- fact(eq,X,Y).
f(X,Y) :- fact(f,X,Y).
\end{verbatim}
\caption{Partial CTSA2 encoding for TC-10.}\label{fig:tc10ctsa2}
\figrule
\end{figure}

\begin{figure}
\footnotesize
\begin{verbatim}
{true(X,R,Y) : relation(R)} = 1 :- element(X); element(Y); X != Y.
true(X,eq,X) :- element(X).
:- true(X,R1,Y); true(Y,R2,Z); not true(X,Rout,Z) : table(R1,R2,Rout).
:- possible(X,_,Y); not true(X,R,Y) : possible(X,R,Y).
relation(eq; rev; alt; ret; s; f; ex; exi; i; dis).
table(eq, eq, (eq)).                   table(f, eq, (f)).
table(eq, rev, (rev)).                 table(f, rev, (exi)).
table(eq, alt, (alt)).                 table(f, alt, (f)).
table(eq, ret, (ret)).                 table(f, ret, (exi)).
table(eq, s, (s)).                     table(f, s, (ex;i;dis)).
table(eq, f, (f)).                     table(f, f, (eq;alt;f)).
table(eq, ex, (ex)).                   table(f, ex, (s;i;dis)).
table(eq, exi, (exi)).                 table(f, exi, (rev;ret;exi)).
table(eq, i, (i)).                     table(f, i, (s;ex;i;dis)).
table(eq, dis, (dis)).                 table(f, dis, (s;ex;i;dis)).
\end{verbatim}
\caption{Partial GEN encoding for TC-10.}\label{fig:tc10gen}
\figrule
\end{figure}

\section{Additional Experiment Results}
\label{sec:additional}

In this appendix, we include and discuss additional results derived from the experiments discussed in Section~\ref{sec:eval}. Specifically, we include program size and grounding time values for all experiments, obtained by running clingo in gringo mode.

\begin{figure}
\centering
\subfloat{
\begin{tikzpicture}[node distance = 2cm, scale=0.79, transform shape]
\begin{axis}[xlabel= Trajectories,ylabel= Grounding time (s),legend pos= north west]
    \addplot+[color=color1,mark=otimes, mark size=2,error bars/.cd, y dir=both,y explicit] coordinates {
    (10,0)
    (20,0)
    (30,1.400)
    (40,4.130)
    (50,10.210)
    (60,17.400)
    (70,28.870)
    (80,42.700)
    (90,61.600)
    (100,86.730)
    (110,111.610)
    };
    \addplot+[color=color2,mark=x, mark size=2,error bars/.cd, y dir=both,y explicit] coordinates {
    (10,0)
    (20,0)
    (30,0)
    (40,0)
    (50,1.050)
    (60,2.550)
    (70,3.870)
    (80,6.210)
    (90,7.320)
    (100,9.880)
    (110,12.960)
    (120,17.080)
    (130,21.680)
    (140,27.780)
    (150,32.850)
    (160,40.970)
    (170,49.080)
    (180,60.250)
    (190,72.440)
    (200,83.580)
    (210,93.730)
    (220,108.930)
    (230,125.200)
    (240,143.530)
    (250,163.750)
    };
    \addplot+[color=color3,mark=oplus, mark size=2, error bars/.cd, y dir=both,y explicit] coordinates {
    (10,0)
    (20,0)
    (30,0.220)
    (40,0.560)
    (50,1.010)
    (60,2.070)
    (70,3.350)
    (80,4.840)
    (90,7.320)
    (100,9.370)
    (110,12.970)
    (120,17.080)
    (130,22.690)
    (140,26.760)
    (150,32.870)
    (160,40.980)
    (170,50.110)
    (180,61.271)
    (190,74.460)
    (200,82.600)
    (210,95.760)
    (220,112.020)
    (230,130.290)
    (240,149.560)
    (250,171.900)
    };
    \addplot+[color=color4,mark=+, mark size=2, error bars/.cd, y dir=both,y explicit] coordinates {
    (10,0)
    (20,0.660)
    (30,2.290)
    (40,5.280)
    (50,10.430)
    (60,18.140)
    (70,28.840)
    (80,43.060)
    (90,62.390)
    (100,85.760)
    (110,114.260)
    (120,154.910)
    (130,196.650)
    (140,247.590)
    };
    \legend{COI7, CTSA, CTSA2, GEN}
    \end{axis}
\end{tikzpicture}
\label{fig:eval-tc6-d1gtime}
}\quad
\subfloat{
\begin{tikzpicture}[node distance = 2cm, scale=0.79, transform shape]
\begin{axis}[xlabel= Trajectories,ylabel= Program size (LOC),legend pos= north west]
\addplot+[color=color1,mark=otimes, mark size=2,error bars/.cd, y dir=both,y explicit] coordinates {
    (10,81)
    (20,96)
    (30,111)
    (40,126)
    (50,141)
    (60,156)
    (70,171)
    (80,186)
    (90,201)
    (100,216)
    (110,231)
    };
    \addplot+[color=color2,mark=x, mark size=2,error bars/.cd, y dir=both,y explicit] coordinates {
    (10,51)
    (20,66)
    (30,81)
    (40,96)
    (50,111)
    (60,126)
    (70,141)
    (80,156)
    (90,171)
    (100,186)
    (110,201)
    (120,216)
    (130,231)
    (140,246)
    (150,261)
    (160,276)
    (170,291)
    (180,306)
    (190,321)
    (200,336)
    (210,351)
    (220,366)
    (230,381)
    (240,396)
    (250,411)
    };
    \addplot+[color=color3,mark=oplus, mark size=2, error bars/.cd, y dir=both,y explicit] coordinates {
    (10,57)
    (20,72)
    (30,87)
    (40,102)
    (50,117)
    (60,132)
    (70,147)
    (80,162)
    (90,177)
    (100,192)
    (110,207)
    (120,222)
    (130,237)
    (140,252)
    (150,267)
    (160,282)
    (170,297)
    (180,312)
    (190,327)
    (200,342)
    (210,357)
    (220,372)
    (230,387)
    (240,402)
    (250,417)
    };
    \addplot+[color=color4,mark=+, mark size=2, error bars/.cd, y dir=both,y explicit] coordinates {
    (10,61)
    (20,76)
    (30,91)
    (40,106)
    (50,121)
    (60,136)
    (70,151)
    (80,166)
    (90,181)
    (100,196)
    (110,211)
    (120,226)
    (130,241)
    (140,256)
    };
    \legend{COI7, CTSA, CTSA2, GEN}
    \end{axis}
\end{tikzpicture}
\label{fig:eval-tc6-d1loc}
}
\caption{TC-6: Finding a consistent configuration when only one relation per trajectory is known.}
\label{fig:eval-tc6-d1-b}
\end{figure}
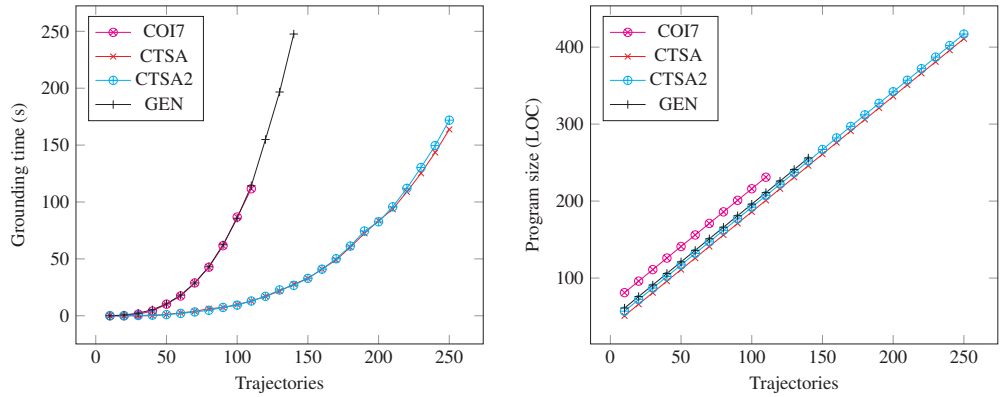

Figure~\ref{fig:eval-tc6-d1-b} shows grounding time and program size results for the first TC-6 experiment. Similarly to Fig.~\ref{fig:eval-tc6-d1}, COI7 and GEN are an order of magnitude slower than CTSA and CTSA2. GEN spends almost equal time for grounding compared to COI7, which means that the slightly longer overall CPU time for GEN shown in Fig.~\ref{fig:eval-tc6-d1time} is due to solving. The same holds for CTSA and CTSA2: the slight difference in overall CPU time is attributed more to the solving process, since the difference is less pronounced between grounding times.

In terms of program size, COI7 uses slightly more lines of code (LOC) than the other encodings due to the unnecessary integrity constraints for pairwise disjointness. The slight difference between CTSA and CTSA2 is due to rules that derive a relation whenever it is known as a fact, which are necessary due to the differentiation between known and derived facts that is introduced in CTSA2.

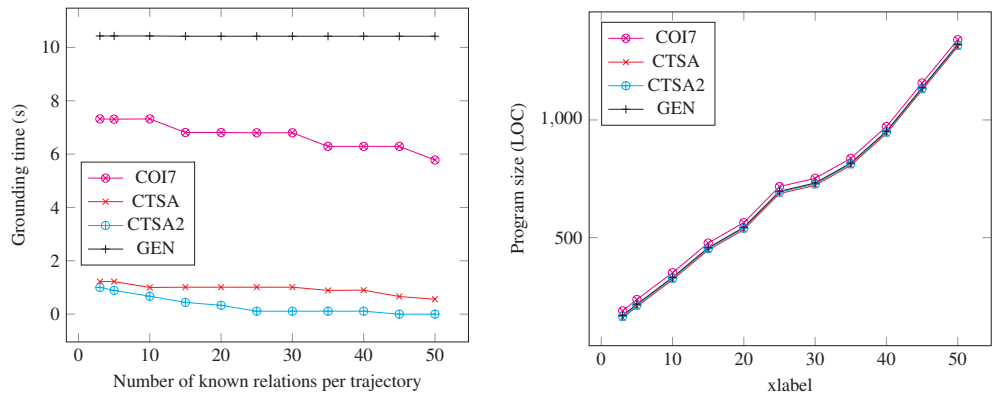
\begin{figure}
\centering
\subfloat{
\begin{tikzpicture}[node distance = 2cm, scale=0.78, transform shape]
\begin{axis}[xlabel= Number of known relations per trajectory,ylabel= Grounding time (s),legend style={at={(0.32,0.54)}}]
    \addplot+[color=color1,mark=otimes, mark size=2,error bars/.cd, y dir=both,y explicit] coordinates {
    (3,7.320)
    (5,7.310)
    (10,7.320)
    (15,6.810)
    (20,6.810)
    (25,6.800)
    (30,6.800)
    (35,6.290)
    (40,6.290)
    (45,6.290)
    (50,5.780)
    };
    \addplot+[color=color2,mark=x, mark size=2,error bars/.cd, y dir=both,y explicit] coordinates {
    (3,1.220)
    (5,1.220)
    (10,1.000)
    (15,1.010)
    (20,1.010)
    (25,1.010)
    (30,1.010)
    (35,0.890)
    (40,0.900)
    (45,0.660)
    (50,0.560)
    };
    \addplot+[color=color3,mark=oplus, mark size=2, error bars/.cd, y dir=both,y explicit] coordinates {
    (3,1.000)
    (5,0.890)
    (10,0.670)
    (15,0.440)
    (20,0.330)
    (25,0.110)
    (30,0.110)
    (35,0.110)
    (40,0.110)
    (45,0)
    (50,0)
    };
    \addplot+[color=color4,mark=+, mark size=2, error bars/.cd, y dir=both,y explicit] coordinates {
    (3,10.430)
    (5,10.430)
    (10,10.430)
    (15,10.420)
    (20,10.420)
    (25,10.420)
    (30,10.420)
    (35,10.420)
    (40,10.420)
    (45,10.420)
    (50,10.420)
    };
    \legend{COI7, CTSA, CTSA2, GEN}
    \end{axis}
\end{tikzpicture}
\label{fig:eval-tc6-50gtime}
}\quad
\subfloat{
\begin{tikzpicture}[node distance = 2cm, scale=0.78, transform shape]
\begin{axis}[xlabel= xlabel= Number of known relations per trajectory,ylabel= Program size (LOC),legend pos= north west]
    \addplot+[color=color1,mark=otimes, mark size=2,error bars/.cd, y dir=both,y explicit] coordinates {
    (3,189)
    (5,237)
    (10,351)
    (15,477)
    (20,564)
    (25,717)
    (30,752)
    (35,837)
    (40,972)
    (45,1157)
    (50,1341)
    };
    \addplot+[color=color2,mark=x, mark size=2,error bars/.cd, y dir=both,y explicit] coordinates {
    (3,159)
    (5,207)
    (10,321)
    (15,447)
    (20,534)
    (25,687)
    (30,722)
    (35,807)
    (40,942)
    (45,1127)
    (50,1311)
    };
    \addplot+[color=color3,mark=oplus, mark size=2, error bars/.cd, y dir=both,y explicit] coordinates {
    (3,165)
    (5,213)
    (10,327)
    (15,453)
    (20,540)
    (25,693)
    (30,728)
    (35,813)
    (40,948)
    (45,1133)
    (50,1317)
    };
    \addplot+[color=color4,mark=+, mark size=2, error bars/.cd, y dir=both,y explicit] coordinates {
    (3,169)
    (5,217)
    (10,331)
    (15,457)
    (20,544)
    (25,697)
    (30,732)
    (35,817)
    (40,952)
    (45,1137)
    (50,1321)
    };
    \legend{COI7, CTSA, CTSA2, GEN}
    \end{axis}
\end{tikzpicture}
\label{fig:eval-tc6-50loc}
}
\caption{TC-6: Determining consistency for 50 trajectories.}
\label{fig:eval-tc6-50-b}
\end{figure}

Grounding time and program size results for the second TC-6 experiment are shown in Fig.~\ref{fig:eval-tc6-50-b}. It is evident that GEN has a constant grounding time regardless of the number of known relations. This is because each known relation adds a possible fact, which only features in a single integrity constraint with variables. In the case of COI7, on the other hand, each known relation adds a specific relation fact, which takes part in multiple rules that form the composition table entries.

In what concerns program size, differences between encodings appear less significant, since the number of LOC increases as more relations per trajectory are known. Note that the plots are not straight lines due to the fact that knowing x out of 50 possible relations per trajectory is not exactly equal to knowing 2x\% of all possible relations because each relation is between two trajectories. For instance, knowing 25 relations per trajectory can be achieved by knowing the relations for 601 unique trajectory pairs, which is slightly less than 50\% of all possible pairs ($\binom{50}{2}=\frac{50!}{2!(50-2)!}=1225$ possible pairs).

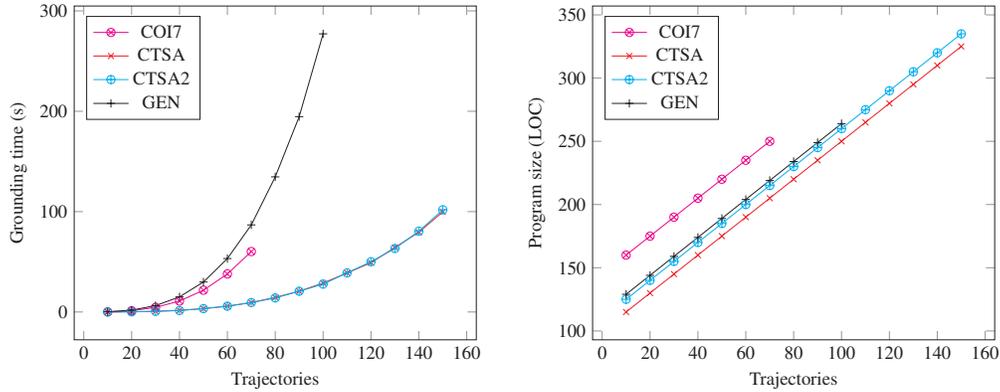
\begin{figure}
\centering
\subfloat{
\begin{tikzpicture}[node distance = 2cm, scale=0.78, transform shape]
\begin{axis}[xlabel= Trajectories,ylabel= Grounding time (s),legend pos= north west]
    \addplot+[color=color1,mark=otimes, mark size=2,error bars/.cd, y dir=both,y explicit] coordinates {
    (10,0.110)
    (20,1.220)
    (30,4.630)
    (40,10.910)
    (50,21.680)
    (60,37.910)
    (70,60.240)
    };
    \addplot+[color=color2,mark=x, mark size=2,error bars/.cd, y dir=both,y explicit] coordinates {
    (10,0)
    (20,0.220)
    (30,0.670)
    (40,1.650)
    (50,3.350)
    (60,5.780)
    (70,9.380)
    (80,14.510)
    (90,20.670)
    (100,28.780)
    (110,38.930)
    (120,49.080)
    (130,64.280)
    (140,79.520)
    (150,99.810)
    };
    \addplot+[color=color3,mark=oplus, mark size=2, error bars/.cd, y dir=both,y explicit] coordinates {
    (10,0)
    (20,0.110)
    (30,0.670)
    (40,1.430)
    (50,3.350)
    (60,5.770)
    (70,9.380)
    (80,13.990)
    (90,20.680)
    (100,27.770)
    (110,38.950)
    (120,50.090)
    (130,63.280)
    (140,80.530)
    (150,101.850)
    };
     \addplot+[color=color4,mark=+, mark size=2, error bars/.cd, y dir=both,y explicit] coordinates {
    (10,0.220)
    (20,1.870)
    (30,6.320)
    (40,15.070)
    (50,29.870)
    (60,53.240)
    (70,86.790)
    (80,134.630)
    (90,194.650)
    (100,277.160)
    };
    \legend{COI7, CTSA, CTSA2, GEN}
    \end{axis}
\end{tikzpicture}
\label{fig:eval-tc10-d1gtime}
}\quad
\subfloat{
\begin{tikzpicture}[node distance = 2cm, scale=0.78, transform shape]
\begin{axis}[xlabel= Trajectories,ylabel= Program size (LOC),legend pos= north west]
    \addplot+[color=color1,mark=otimes, mark size=2,error bars/.cd, y dir=both,y explicit] coordinates {
    (10,160)
    (20,175)
    (30,190)
    (40,205)
    (50,220)
    (60,235)
    (70,250)
    };
    \addplot+[color=color2,mark=x, mark size=2,error bars/.cd, y dir=both,y explicit] coordinates {
    (10,115)
    (20,130)
    (30,145)
    (40,160)
    (50,175)
    (60,190)
    (70,205)
    (80,220)
    (90,235)
    (100,250)
    (110,265)
    (120,280)
    (130,295)
    (140,310)
    (150,325)
    };
    \addplot+[color=color3,mark=oplus, mark size=2, error bars/.cd, y dir=both,y explicit] coordinates {
    (10,125)
    (20,140)
    (30,155)
    (40,170)
    (50,185)
    (60,200)
    (70,215)
    (80,230)
    (90,245)
    (100,260)
    (110,275)
    (120,290)
    (130,305)
    (140,320)
    (150,335)
    };
    \addplot+[color=color4,mark=+, mark size=2, error bars/.cd, y dir=both,y explicit] coordinates {
    (10,129)
    (20,144)
    (30,159)
    (40,174)
    (50,189)
    (60,204)
    (70,219)
    (80,234)
    (90,249)
    (100,264)
    };
    \legend{COI7, CTSA, CTSA2, GEN}
    \end{axis}
\end{tikzpicture}
\label{fig:eval-tc10-d1loc}
}
\caption{TC-10: Finding a consistent configuration when only one relation per trajectory is known.}
\label{fig:eval-tc10-d1-b}
\end{figure}

Figure~\ref{fig:eval-tc6-d1-b} shows grounding time and program size results for the first TC-10 experiment. Again, grounding times for COI7 and GEN are an order of magnitude longer than those of CTSA and CTSA2, which exhibit almost identical times (as is the case for overall CPU time in Fig.~\ref{fig:eval-tc10-d1}), as their differences only affect grounding times when more relations are known per trajectory. Interestingly, GEN's grounding time is slightly higher than COI7, which means that solving time must be slightly lower, since their overall CPU times do not differ much.

The increased number of relations in TC-10 leads to more discernible differences in program size among encodings, which are again due to the same reasons as in TC-6: COI7 has highest LOC counts due to the extra integrity constraints for pairwise disjointness and CTSA2 needs more LOC than CTSA to derive a relation whenever it is known as a fact, for each base relation.

\begin{figure}
\centering
\subfloat{
\begin{tikzpicture}[node distance = 2cm, scale=0.78, transform shape]
\begin{axis}[xlabel= Number known relations per trajectory,ylabel= Grounding time (s),legend style={at={(0.32,0.54)}}]
    \addplot+[color=color1,mark=otimes, mark size=2,error bars/.cd, y dir=both,y explicit] coordinates {
    (3,21.670)
    (5,20.660)
    (10,20.650)
    (15,20.650)
    (20,20.140)
    (25,20.140)
    (30,19.630)
    (35,19.620)
    (40,19.120)
    (45,18.080)
    (50,17.590)
    };
    \addplot+[color=color2,mark=x, mark size=2,error bars/.cd, y dir=both,y explicit] coordinates {
    (3,3.360)
    (5,3.340)
    (10,3.340)
    (15,3.340)
    (20,3.120)
    (25,3.130)
    (30,3.130)
    (35,2.920)
    (40,2.710)
    (45,2.280)
    (50,2.070)
    };
    \addplot+[color=color3,mark=oplus, mark size=2, error bars/.cd, y dir=both,y explicit] coordinates {
    (3,2.920)
    (5,2.490)
    (10,1.860)
    (15,1.220)
    (20,0.900)
    (25,0.330)
    (30,0.330)
    (35,0.220)
    (40,0.220)
    (45,0.110)
    (50,0)
    };
    \addplot+[color=color4,mark=+, mark size=2, error bars/.cd, y dir=both,y explicit] coordinates {
    (3,29.860)
    (5,29.850)
    (10,29.870)
    (15,29.860)
    (20,29.860)
    (25,29.870)
    (30,29.860)
    (35,29.870)
    (40,29.850)
    (45,29.860)
    (50,29.850)
    };
    \legend{COI7, CTSA, CTSA2, GEN}
    \end{axis}
\end{tikzpicture}
\label{fig:eval-tc10-50gtime}
}\quad
\subfloat{
\begin{tikzpicture}[node distance = 2cm, scale=0.78, transform shape]
\begin{axis}[xlabel= Number of known relations per trajectory,ylabel= Program size (LOC),legend pos= north west]
    \addplot+[color=color1,mark=otimes, mark size=2,error bars/.cd, y dir=both,y explicit] coordinates {
    (3,268)
    (5,316)
    (10,430)
    (15,556)
    (20,643)
    (25,796)
    (30,831)
    (35,916)
    (40,1051)
    (45,1236)
    (50,1420)
    };
    \addplot+[color=color2,mark=x, mark size=2,error bars/.cd, y dir=both,y explicit] coordinates {
    (3,223)
    (5,271)
    (10,385)
    (15,511)
    (20,598)
    (25,751)
    (30,786)
    (35,871)
    (40,1006)
    (45,1191)
    (50,1375)
    };
    \addplot+[color=color3,mark=oplus, mark size=2, error bars/.cd, y dir=both,y explicit] coordinates {
    (3,233)
    (5,281)
    (10,395)
    (15,521)
    (20,608)
    (25,761)
    (30,796)
    (35,881)
    (40,1016)
    (45,1201)
    (50,1385)
    };
\addplot+[color=color4,mark=+, mark size=2, error bars/.cd, y dir=both,y explicit] coordinates {
    (3,237)
    (5,265)
    (10,399)
    (15,525)
    (20,612)
    (25,765)
    (30,800)
    (35,885)
    (40,1020)
    (45,1205)
    (50,1389)
    };
    \legend{COI7, CTSA, CTSA2, GEN}
    \end{axis}
\end{tikzpicture}
\label{fig:eval-tc10-50loc}
}
\caption{TC-10: Determining consistency for 50 trajectories.}
\label{fig:eval-tc10-50-b}
\end{figure}
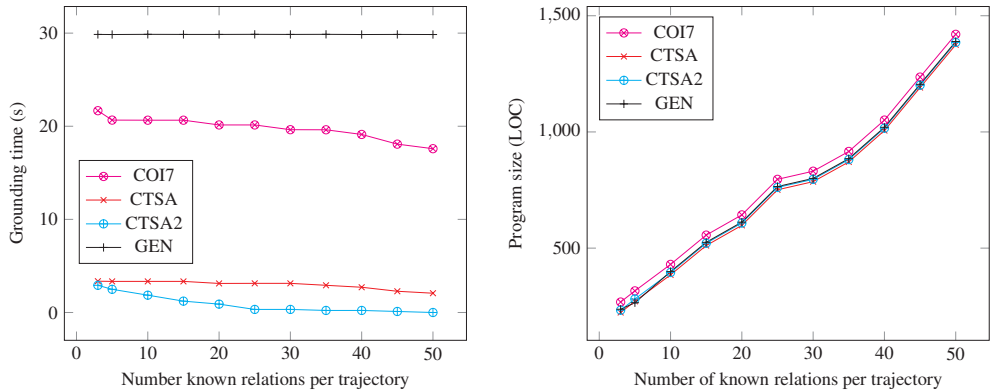

Grounding time and program size results for the second TC-10 experiment are shown in Fig.~\ref{fig:eval-tc10-50-b}. The results mirror those of TC-6, which is expected since an increase from 6 to 10 base relations does not affect the way the encodings behave as more relations are known per trajectory.

\label{lastpage}

\begin{thebibliography}{31}


\ifx \showCODEN    \undefined \def \showCODEN     #1{\unskip}     \fi
\ifx \showDOI      \undefined \def \showDOI       #1{#1}\fi
\ifx \showISBNx    \undefined \def \showISBNx     #1{\unskip}     \fi
\ifx \showISBNxiii \undefined \def \showISBNxiii  #1{\unskip}     \fi
\ifx \showISSN     \undefined \def \showISSN      #1{\unskip}     \fi
\ifx \showLCCN     \undefined \def \showLCCN      #1{\unskip}     \fi
\ifx \shownote     \undefined \def \shownote      #1{#1}          \fi
\ifx \showarticletitle \undefined \def \showarticletitle #1{#1}   \fi
\ifx \showURL      \undefined \def \showURL       {\relax}        \fi
\providecommand\bibfield[2]{#2}
\providecommand\bibinfo[2]{#2}
\providecommand\natexlab[1]{#1}
\providecommand\showeprint[2][]{arXiv:#2}

\bibitem[\protect\citeauthoryear{Allen}{Allen}{1981}]%
        {Allen1981}
\bibfield{author}{\bibinfo{person}{James~F. Allen}.}
  \bibinfo{year}{1981}\natexlab{}.
\newblock \showarticletitle{An Interval-Based Representation of Temporal
  Knowledge.}. In \bibinfo{booktitle}{\emph{IJCAI}},
  \bibfield{editor}{\bibinfo{person}{Patrick~J. Hayes}} (Ed.).
  \bibinfo{publisher}{William Kaufmann}, \bibinfo{pages}{221--226}.
\newblock


\bibitem[\protect\citeauthoryear{Andrienko, Andrienko, Fuchs, and
  Wood}{Andrienko et~al\mbox{.}}{2017}]%
        {Andrienko2017}
\bibfield{author}{\bibinfo{person}{Gennady~L. Andrienko},
  \bibinfo{person}{Natalia~V. Andrienko}, \bibinfo{person}{Georg Fuchs}, {and}
  \bibinfo{person}{Jo Wood}.} \bibinfo{year}{2017}\natexlab{}.
\newblock \showarticletitle{Revealing Patterns and Trends of Mass Mobility
  Through Spatial and Temporal Abstraction of Origin-Destination Movement
  Data.}
\newblock \bibinfo{journal}{\emph{IEEE Trans. Vis. Comput. Graph.}}
  \bibinfo{volume}{23}, \bibinfo{number}{9} (\bibinfo{year}{2017}),
  \bibinfo{pages}{2120--2136}.
\newblock


\bibitem[\protect\citeauthoryear{Brenton, Faber, and Batsakis}{Brenton
  et~al\mbox{.}}{2016}]%
        {Brenton2016}
\bibfield{author}{\bibinfo{person}{Christopher Brenton},
  \bibinfo{person}{Wolfgang Faber}, {and} \bibinfo{person}{Sotiris Batsakis}.}
  \bibinfo{year}{2016}\natexlab{}.
\newblock \showarticletitle{Answer Set Programming for Qualitative
  Spatio-Temporal Reasoning: Methods and Experiments.}. In
  \bibinfo{booktitle}{\emph{ICLP (Technical Communications)}}
  \emph{(\bibinfo{series}{OASICS})}, \bibfield{editor}{\bibinfo{person}{Manuel
  Carro}, \bibinfo{person}{Andy King}, \bibinfo{person}{Neda Saeedloei}, {and}
  \bibinfo{person}{Marina~De Vos}} (Eds.), Vol.~\bibinfo{volume}{52}.
  \bibinfo{publisher}{Schloss Dagstuhl - Leibniz-Zentrum fuer Informatik},
  \bibinfo{pages}{4:1--4:15}.
\newblock
\showISBNx{978-3-95977-007-1}


\bibitem[\protect\citeauthoryear{Cadoli and Schaerf}{Cadoli and
  Schaerf}{1993}]%
        {DBLP:journals/jlp/CadoliS93}
\bibfield{author}{\bibinfo{person}{Marco Cadoli} {and} \bibinfo{person}{Marco
  Schaerf}.} \bibinfo{year}{1993}\natexlab{}.
\newblock \showarticletitle{A Survey of Complexity Results for Nonmonotonic
  Logics}.
\newblock \bibinfo{journal}{\emph{J. Log. Program.}} \bibinfo{volume}{17},
  \bibinfo{number}{2/3{\&}4} (\bibinfo{year}{1993}), \bibinfo{pages}{127--160}.
\newblock


\bibitem[\protect\citeauthoryear{Cohn, Bennett, Gooday, and Gotts}{Cohn
  et~al\mbox{.}}{1997}]%
        {Cohn1997}
\bibfield{author}{\bibinfo{person}{Anthony~G. Cohn}, \bibinfo{person}{Brandon
  Bennett}, \bibinfo{person}{John Gooday}, {and} \bibinfo{person}{Nicholas~Mark
  Gotts}.} \bibinfo{year}{1997}\natexlab{}.
\newblock \showarticletitle{Qualitative Spatial Representation and Reasoning
  with the Region Connection Calculus}.
\newblock \bibinfo{journal}{\emph{GeoInformatica}} \bibinfo{volume}{1},
  \bibinfo{number}{3} (\bibinfo{year}{1997}), \bibinfo{pages}{275--316}.
\newblock


\bibitem[\protect\citeauthoryear{Delafontaine, Bogaert, Cohn, Witlox, Maeyer,
  and de~Weghe}{Delafontaine et~al\mbox{.}}{2011a}]%
        {Delafontaine2011}
\bibfield{author}{\bibinfo{person}{Matthias Delafontaine},
  \bibinfo{person}{Peter Bogaert}, \bibinfo{person}{Anthony~G. Cohn},
  \bibinfo{person}{Frank Witlox}, \bibinfo{person}{Philippe~De Maeyer}, {and}
  \bibinfo{person}{Nico~Van de Weghe}.} \bibinfo{year}{2011}\natexlab{a}.
\newblock \showarticletitle{Inferring additional knowledge from QTCN
  relations.}
\newblock \bibinfo{journal}{\emph{Inf. Sci.}} \bibinfo{volume}{181},
  \bibinfo{number}{9} (\bibinfo{year}{2011}), \bibinfo{pages}{1573--1590}.
\newblock


\bibitem[\protect\citeauthoryear{Delafontaine, Cohn, and de~Weghe}{Delafontaine
  et~al\mbox{.}}{2011b}]%
        {Delafontaine2011B}
\bibfield{author}{\bibinfo{person}{Matthias Delafontaine},
  \bibinfo{person}{Anthony~G. Cohn}, {and} \bibinfo{person}{Nico~Van de
  Weghe}.} \bibinfo{year}{2011}\natexlab{b}.
\newblock \showarticletitle{Implementing a qualitative calculus to analyse
  moving point objects.}
\newblock \bibinfo{journal}{\emph{Expert Syst. Appl.}} \bibinfo{volume}{38},
  \bibinfo{number}{5} (\bibinfo{year}{2011}), \bibinfo{pages}{5187--5196}.
\newblock


\bibitem[\protect\citeauthoryear{Eiter, Gottlob, and Mannila}{Eiter
  et~al\mbox{.}}{1997}]%
        {Eiter1997}
\bibfield{author}{\bibinfo{person}{Thomas Eiter}, \bibinfo{person}{Georg
  Gottlob}, {and} \bibinfo{person}{Heikki Mannila}.}
  \bibinfo{year}{1997}\natexlab{}.
\newblock \showarticletitle{Disjunctive Datalog}.
\newblock \bibinfo{journal}{\emph{{ACM} Trans. Database Syst.}}
  \bibinfo{volume}{22}, \bibinfo{number}{3} (\bibinfo{year}{1997}),
  \bibinfo{pages}{364--418}.
\newblock


\bibitem[\protect\citeauthoryear{Escrig and Toledo}{Escrig and Toledo}{2002}]%
        {Escrig2002}
\bibfield{author}{\bibinfo{person}{M.~Teresa Escrig} {and}
  \bibinfo{person}{Francisco Toledo}.} \bibinfo{year}{2002}\natexlab{}.
\newblock \showarticletitle{Qualitative Velocity.}. In
  \bibinfo{booktitle}{\emph{CCIA}} \emph{(\bibinfo{series}{Lecture Notes in
  Computer Science})}, \bibfield{editor}{\bibinfo{person}{M.~Teresa Escrig},
  \bibinfo{person}{Francisco Toledo}, {and} \bibinfo{person}{Elisabet
  Golobardes}} (Eds.), Vol.~\bibinfo{volume}{2504}.
  \bibinfo{publisher}{Springer}, \bibinfo{pages}{29--39}.
\newblock
\showISBNx{3-540-00011-9}


\bibitem[\protect\citeauthoryear{Freuder and Mackworth}{Freuder and
  Mackworth}{2006}]%
        {Freuder2006}
\bibfield{author}{\bibinfo{person}{Eugene~C. Freuder} {and}
  \bibinfo{person}{Alan~K. Mackworth}.} \bibinfo{year}{2006}\natexlab{}.
\newblock \showarticletitle{{Constraint Satisfaction: An Emerging Paradigm}}.
\newblock In \bibinfo{booktitle}{\emph{Handbook of Constraint Programming}},
  \bibfield{editor}{\bibinfo{person}{Francesca Rossi}, \bibinfo{person}{Peter
  van Beek}, {and} \bibinfo{person}{Toby Walsh}} (Eds.).
  \bibinfo{series}{Foundations of Artificial Intelligence},
  Vol.~\bibinfo{volume}{2}. \bibinfo{publisher}{Elsevier Science Inc.},
  \bibinfo{address}{New York, NY, USA}, \bibinfo{pages}{13--27}.
\newblock
\showISSN{1574-6526}


\bibitem[\protect\citeauthoryear{Gebser, Harrison, Kaminski, Lifschitz, and
  Schaub}{Gebser et~al\mbox{.}}{2015}]%
        {Gebser2015}
\bibfield{author}{\bibinfo{person}{Martin Gebser}, \bibinfo{person}{Amelia
  Harrison}, \bibinfo{person}{Roland Kaminski}, \bibinfo{person}{Vladimir
  Lifschitz}, {and} \bibinfo{person}{Torsten Schaub}.}
  \bibinfo{year}{2015}\natexlab{}.
\newblock \showarticletitle{Abstract gringo}.
\newblock \bibinfo{journal}{\emph{{TPLP}}} \bibinfo{volume}{15},
  \bibinfo{number}{4-5} (\bibinfo{year}{2015}), \bibinfo{pages}{449--463}.
\newblock


\bibitem[\protect\citeauthoryear{Gebser, Kaminski, Kaufmann, Ostrowski, Schaub,
  and Wanko}{Gebser et~al\mbox{.}}{2016}]%
        {Gebser2016}
\bibfield{author}{\bibinfo{person}{Martin Gebser}, \bibinfo{person}{Roland
  Kaminski}, \bibinfo{person}{Benjamin Kaufmann}, \bibinfo{person}{Max
  Ostrowski}, \bibinfo{person}{Torsten Schaub}, {and} \bibinfo{person}{Philipp
  Wanko}.} \bibinfo{year}{2016}\natexlab{}.
\newblock \showarticletitle{Theory Solving Made Easy with Clingo~5}. In
  \bibinfo{booktitle}{\emph{Technical Communications of the Thirty-second
  International Conference on Logic Programming (ICLP'16)}}
  \emph{(\bibinfo{series}{Open Access Series in Informatics (OASIcs)})},
  \bibfield{editor}{\bibinfo{person}{Manuel Carro} {and} \bibinfo{person}{Andy
  King}} (Eds.), Vol.~\bibinfo{volume}{52}. \bibinfo{publisher}{Schloss
  Dagstuhl}, \bibinfo{pages}{2:1--2:15}.
\newblock


\bibitem[\protect\citeauthoryear{Hazarika and Cohn}{Hazarika and Cohn}{2001}]%
        {Hazarika2001}
\bibfield{author}{\bibinfo{person}{Shyamanta~M. Hazarika} {and}
  \bibinfo{person}{Anthony~G. Cohn}.} \bibinfo{year}{2001}\natexlab{}.
\newblock \showarticletitle{Qualitative Spatio-Temporal Continuity.}. In
  \bibinfo{booktitle}{\emph{COSIT}} \emph{(\bibinfo{series}{Lecture Notes in
  Computer Science})}, \bibfield{editor}{\bibinfo{person}{Daniel~R. Montello}}
  (Ed.), Vol.~\bibinfo{volume}{2205}. \bibinfo{publisher}{Springer},
  \bibinfo{pages}{92--107}.
\newblock
\showISBNx{3-540-42613-2}


\bibitem[\protect\citeauthoryear{Hu, Peeta, and Liou}{Hu et~al\mbox{.}}{2016}]%
        {Hu2016}
\bibfield{author}{\bibinfo{person}{Shou-Ren Hu}, \bibinfo{person}{Srinivas
  Peeta}, {and} \bibinfo{person}{Han-Tsung Liou}.}
  \bibinfo{year}{2016}\natexlab{}.
\newblock \showarticletitle{Integrated Determination of Network
  Origin-Destination Trip Matrix and Heterogeneous Sensor Selection and
  Location Strategy.}
\newblock \bibinfo{journal}{\emph{IEEE Trans. Intelligent Transportation
  Systems}} \bibinfo{volume}{17}, \bibinfo{number}{1} (\bibinfo{year}{2016}),
  \bibinfo{pages}{195--205}.
\newblock


\bibitem[\protect\citeauthoryear{Li}{Li}{2012}]%
        {Li2012}
\bibfield{author}{\bibinfo{person}{Jason~Jingshi Li}.}
  \bibinfo{year}{2012}\natexlab{}.
\newblock \showarticletitle{Qualitative Spatial and Temporal Reasoning with
  Answer Set Programming.}. In \bibinfo{booktitle}{\emph{ICTAI}}.
  \bibinfo{publisher}{IEEE Computer Society}, \bibinfo{pages}{603--609}.
\newblock
\showISBNx{978-1-4799-0227-9}


\bibitem[\protect\citeauthoryear{Ma, Lu, Liu, Wang, and Xiong}{Ma
  et~al\mbox{.}}{2017}]%
        {Ma2017}
\bibfield{author}{\bibinfo{person}{Zijian Ma}, \bibinfo{person}{Dan Lu},
  \bibinfo{person}{Qian Liu}, \bibinfo{person}{Jingyuan Wang}, {and}
  \bibinfo{person}{Zhang Xiong}.} \bibinfo{year}{2017}\natexlab{}.
\newblock \showarticletitle{City-Eyes: A multi-source data integration basec
  smart city analysis system.}. In \bibinfo{booktitle}{\emph{Proceedings of the
  IEEE 18th International Symposium on A World of Wireless, Mobile and
  Multimedia Networks (WoWMoM2017)}}. \bibinfo{publisher}{IEEE},
  \bibinfo{pages}{1--3}.
\newblock
\showISBNx{978-1-5386-2723-5}


\bibitem[\protect\citeauthoryear{Marek and Truszczynski}{Marek and
  Truszczynski}{1989}]%
        {DBLP:conf/slp/MarekT89}
\bibfield{author}{\bibinfo{person}{V.~Wiktor Marek} {and}
  \bibinfo{person}{Miroslaw Truszczynski}.} \bibinfo{year}{1989}\natexlab{}.
\newblock \showarticletitle{Stable Semantics for Logic Programs and Default
  Theories}. In \bibinfo{booktitle}{\emph{Logic Programming, Proceedings of the
  North American Conference 1989, Cleveland, Ohio, USA, October 16-20, 1989. 2
  Volumes}}, \bibfield{editor}{\bibinfo{person}{Ewing~L. Lusk} {and}
  \bibinfo{person}{Ross~A. Overbeek}} (Eds.). \bibinfo{publisher}{{MIT} Press},
  \bibinfo{pages}{243--256}.
\newblock


\bibitem[\protect\citeauthoryear{Mart\'{i}nez-Mart\'{i}n, Escrig, and del
  Pobil}{Mart\'{i}nez-Mart\'{i}n et~al\mbox{.}}{2012}]%
        {Martinez2012}
\bibfield{author}{\bibinfo{person}{Ester Mart\'{i}nez-Mart\'{i}n},
  \bibinfo{person}{M.~Teresa Escrig}, {and} \bibinfo{person}{Angel~Pasqual del
  Pobil}.} \bibinfo{year}{2012}\natexlab{}.
\newblock \showarticletitle{A General Qualitative Spatio-Temporal Model Based
  on Intervals.}
\newblock \bibinfo{journal}{\emph{J. UCS}} \bibinfo{volume}{18},
  \bibinfo{number}{10} (\bibinfo{year}{2012}), \bibinfo{pages}{1343--1378}.
\newblock


\bibitem[\protect\citeauthoryear{Moreira-Matias, Gama, Ferreira,
  Mendes-Moreira, and Damas}{Moreira-Matias et~al\mbox{.}}{2013}]%
        {Moreira2013}
\bibfield{author}{\bibinfo{person}{Luís Moreira-Matias},
  \bibinfo{person}{João Gama}, \bibinfo{person}{Michel Ferreira},
  \bibinfo{person}{João Mendes-Moreira}, {and} \bibinfo{person}{Luís Damas}.}
  \bibinfo{year}{2013}\natexlab{}.
\newblock \showarticletitle{Predicting Taxi-Passenger Demand Using Streaming
  Data.}
\newblock \bibinfo{journal}{\emph{IEEE Trans. Intelligent Transportation
  Systems}} \bibinfo{volume}{14}, \bibinfo{number}{3} (\bibinfo{year}{2013}),
  \bibinfo{pages}{1393--1402}.
\newblock


\bibitem[\protect\citeauthoryear{Muller}{Muller}{1998}]%
        {Muller1998}
\bibfield{author}{\bibinfo{person}{Philippe Muller}.}
  \bibinfo{year}{1998}\natexlab{}.
\newblock \showarticletitle{A Qualitative Theory of Motion Based on
  Spatio-Temporal Primitives.}. In \bibinfo{booktitle}{\emph{KR}},
  \bibfield{editor}{\bibinfo{person}{Anthony~G. Cohn},
  \bibinfo{person}{Lenhart~K. Schubert}, {and} \bibinfo{person}{Stuart~C.
  Shapiro}} (Eds.). \bibinfo{publisher}{Morgan Kaufmann},
  \bibinfo{pages}{131--143}.
\newblock


\bibitem[\protect\citeauthoryear{Noyon, Claramunt, and Devogele}{Noyon
  et~al\mbox{.}}{2007}]%
        {Noyon2007}
\bibfield{author}{\bibinfo{person}{Valérie Noyon}, \bibinfo{person}{Christophe
  Claramunt}, {and} \bibinfo{person}{Thomas Devogele}.}
  \bibinfo{year}{2007}\natexlab{}.
\newblock \showarticletitle{A Relative Representation of Trajectories in
  Geogaphical Spaces.}
\newblock \bibinfo{journal}{\emph{GeoInformatica}} \bibinfo{volume}{11},
  \bibinfo{number}{4} (\bibinfo{year}{2007}), \bibinfo{pages}{479--496}.
\newblock


\bibitem[\protect\citeauthoryear{Patroumpas and Sellis}{Patroumpas and
  Sellis}{2015}]%
        {Patroumpas2015}
\bibfield{author}{\bibinfo{person}{Kostas Patroumpas} {and}
  \bibinfo{person}{Timos~K. Sellis}.} \bibinfo{year}{2015}\natexlab{}.
\newblock \showarticletitle{Managing Big Trajectory Data: Online Processing of
  Positional Streams.}
\newblock In \bibinfo{booktitle}{\emph{Big Data - Algorithms, Analytics, and
  Applications}}, \bibfield{editor}{\bibinfo{person}{Kuan-Ching Li},
  \bibinfo{person}{Hai Jiang}, \bibinfo{person}{Laurence~T. Yang}, {and}
  \bibinfo{person}{Alfredo Cuzzocrea}} (Eds.). \bibinfo{publisher}{Chapman and
  Hall/CRC}, \bibinfo{pages}{257--280}.
\newblock
\showISBNx{978-1-4822-4056-6}


\bibitem[\protect\citeauthoryear{Renz}{Renz}{2002}]%
        {Renz2002}
\bibfield{editor}{\bibinfo{person}{Jochen Renz}} (Ed.).
  \bibinfo{year}{2002}\natexlab{}.
\newblock \bibinfo{booktitle}{\emph{The Region Connection Calculus}}.
\newblock \bibinfo{publisher}{Springer Berlin Heidelberg},
  \bibinfo{pages}{41--50}.
\newblock
\showISBNx{978-3-540-70736-3}


\bibitem[\protect\citeauthoryear{Renz, Rauh, and Knauff}{Renz
  et~al\mbox{.}}{2000}]%
        {Renz2000}
\bibfield{author}{\bibinfo{person}{Jochen Renz}, \bibinfo{person}{Reinhold
  Rauh}, {and} \bibinfo{person}{Markus Knauff}.}
  \bibinfo{year}{2000}\natexlab{}.
\newblock \showarticletitle{Towards Cognitive Adequacy of Topological Spatial
  Relations.}. In \bibinfo{booktitle}{\emph{Spatial Cognition}}
  \emph{(\bibinfo{series}{Lecture Notes in Computer Science})},
  \bibfield{editor}{\bibinfo{person}{Christian Freksa},
  \bibinfo{person}{Wilfried Brauer}, \bibinfo{person}{Christopher Habel}, {and}
  \bibinfo{person}{Karl~Friedrich Wender}} (Eds.), Vol.~\bibinfo{volume}{1849}.
  \bibinfo{publisher}{Springer}, \bibinfo{pages}{184--197}.
\newblock
\showISBNx{3-540-67584-1}


\bibitem[\protect\citeauthoryear{Van~de Weghe}{Van~de Weghe}{2017}]%
        {Weghe2017}
\bibfield{author}{\bibinfo{person}{Nico Van~de Weghe}.}
  \bibinfo{year}{2017}\natexlab{}.
\newblock \showarticletitle{Spatiotemporal Relations for Moving Objects.}
\newblock In \bibinfo{booktitle}{\emph{Encyclopedia of GIS}},
  \bibfield{editor}{\bibinfo{person}{Shashi Shekhar}, \bibinfo{person}{Hui
  Xiong}, {and} \bibinfo{person}{Xun Zhou}} (Eds.).
  \bibinfo{publisher}{Springer}, \bibinfo{pages}{2177--2186}.
\newblock
\showISBNx{978-3-319-17885-1}


\bibitem[\protect\citeauthoryear{Van~de Weghe, Cohn, De~Tr\'{e}, and
  De~Maeyer}{Van~de Weghe et~al\mbox{.}}{2006}]%
        {Weghe2006}
\bibfield{author}{\bibinfo{person}{Nico Van~de Weghe},
  \bibinfo{person}{Anthony~G. Cohn}, \bibinfo{person}{Guy De~Tr\'{e}}, {and}
  \bibinfo{person}{Philippe De~Maeyer}.} \bibinfo{year}{2006}\natexlab{}.
\newblock \showarticletitle{A qualitative trajectory calculus as a basis for
  representing moving objects in Geographical Information Systems.}
\newblock \bibinfo{journal}{\emph{Control and Cybernetics}}
  \bibinfo{volume}{35}, \bibinfo{number}{1} (\bibinfo{year}{2006}),
  \bibinfo{pages}{97--119}.
\newblock


\bibitem[\protect\citeauthoryear{Van~de Weghe, Kuijpers, Bogaert, and
  Maeyer}{Van~de Weghe et~al\mbox{.}}{2005}]%
        {Weghe2005}
\bibfield{author}{\bibinfo{person}{Nico Van~de Weghe}, \bibinfo{person}{Bart
  Kuijpers}, \bibinfo{person}{Peter Bogaert}, {and}
  \bibinfo{person}{Philippe~De Maeyer}.} \bibinfo{year}{2005}\natexlab{}.
\newblock \showarticletitle{A Qualitative Trajectory Calculus and the
  Composition of Its Relations.}. In \bibinfo{booktitle}{\emph{GeoS}}
  \emph{(\bibinfo{series}{Lecture Notes in Computer Science})},
  \bibfield{editor}{\bibinfo{person}{M.~Andrea Rodríguez},
  \bibinfo{person}{Isabel~F. Cruz}, \bibinfo{person}{Max~J. Egenhofer}, {and}
  \bibinfo{person}{Sergei Levashkin}} (Eds.), Vol.~\bibinfo{volume}{3799}.
  \bibinfo{publisher}{Springer}, \bibinfo{pages}{60--76}.
\newblock
\showISBNx{3-540-30288-3}


\bibitem[\protect\citeauthoryear{Walega, Schultz, and Bhatt}{Walega
  et~al\mbox{.}}{2017}]%
        {Walega2017}
\bibfield{author}{\bibinfo{person}{Przemyslaw~Andrzej Walega},
  \bibinfo{person}{Carl P.~L. Schultz}, {and} \bibinfo{person}{Mehul Bhatt}.}
  \bibinfo{year}{2017}\natexlab{}.
\newblock \showarticletitle{Non-monotonic spatial reasoning with answer set
  programming modulo theories.}
\newblock \bibinfo{journal}{\emph{TPLP}} \bibinfo{volume}{17},
  \bibinfo{number}{2} (\bibinfo{year}{2017}), \bibinfo{pages}{205--225}.
\newblock


\bibitem[\protect\citeauthoryear{Wang, Jin, Zhang, Yang, and Ji}{Wang
  et~al\mbox{.}}{2017}]%
        {Wang2017}
\bibfield{author}{\bibinfo{person}{Zhaoyang Wang}, \bibinfo{person}{Beihong
  Jin}, \bibinfo{person}{Fusang Zhang}, \bibinfo{person}{Ruiyang Yang}, {and}
  \bibinfo{person}{Qiang Ji}.} \bibinfo{year}{2017}\natexlab{}.
\newblock \showarticletitle{Exploiting Trip Patterns in Passenger Trajectory
  Streams for Bus Scheduling Optimization in Real Time.}. In
  \bibinfo{booktitle}{\emph{Proceedings of the 18th IEEE International
  Conference on Mobile Data Management}}. \bibinfo{publisher}{IEEE Computer
  Society}, \bibinfo{pages}{266--271}.
\newblock
\showISBNx{978-1-5386-3932-0}


\bibitem[\protect\citeauthoryear{Yuan, Zheng, Xie, and Sun}{Yuan
  et~al\mbox{.}}{2013}]%
        {Yuan2013}
\bibfield{author}{\bibinfo{person}{Jing Yuan}, \bibinfo{person}{Yu Zheng},
  \bibinfo{person}{Xing Xie}, {and} \bibinfo{person}{Guangzhong Sun}.}
  \bibinfo{year}{2013}\natexlab{}.
\newblock \showarticletitle{T-Drive: Enhancing Driving Directions with Taxi
  Drivers' Intelligence.}
\newblock \bibinfo{journal}{\emph{IEEE Trans. Knowl. Data Eng.}}
  \bibinfo{volume}{25}, \bibinfo{number}{1} (\bibinfo{year}{2013}),
  \bibinfo{pages}{220--232}.
\newblock


\bibitem[\protect\citeauthoryear{Zimmermann and Freksa}{Zimmermann and
  Freksa}{1996}]%
        {Zimmermann1996}
\bibfield{author}{\bibinfo{person}{Kai Zimmermann} {and}
  \bibinfo{person}{Christian Freksa}.} \bibinfo{year}{1996}\natexlab{}.
\newblock \showarticletitle{Qualitative Spatial Reasoning Using Orientation,
  Distance, and Path Knowledge.}
\newblock \bibinfo{journal}{\emph{Appl. Intell.}} \bibinfo{volume}{6},
  \bibinfo{number}{1} (\bibinfo{year}{1996}), \bibinfo{pages}{49--58}.
\newblock


\end{thebibliography}
\end{document}